\newif\ifcomments
\newcommand{\red}[1]{\textcolor{red}{#1}}
\newcommand{\R}{\mathbb{R}}
\newcommand{\N}{\mathbb{N}}
\newcommand{\abs}[1]{ \left\vert #1\right\vert }
\newcommand{\norm}[1]{ \left\Vert #1\right\Vert }
\newcommand{\set}[1]{\left\{#1\right\}}
\newcommand{\eval}[2]{\underset{{#1}}{\mathbb{E}}\left[#2\right]}
\newcommand{\Prob}[2]{\underset{{#1}}{\Pr}\left(#2\right)}
\newcommand{\st}{\;.\;}
\newcommand{\boolhc}{\set{0,1}^n}
\newcommand{\poly}{\text{poly}}
\newcommand{\supp}{\text{supp}}
\newcommand{\sgn}{\text{sgn}}
\newcommand{\ltf}{\mathsf{LTF}}
\newcommand{\ltfreal}{\ltf_{\mathbb{R}^n}}
\newcommand{\given}{\;|\;}
\newcommand{\risk}{\mathsf{R}}
\newcommand{\adv}{\mathbb{A}}
\newcommand{\LEQ}{\mathsf{LEQ}}
\newcommand{\EQ}{\mathsf{EQ}}
\newcommand{\EX}{\mathsf{EX}}
\newcommand{\LMQ}{\mathsf{LMQ}}
\newcommand{\MQ}{\mathsf{MQ}}
\newcommand{\PAC}{\mathsf{PAC}}
\newcommand{\VC}{\mathsf{VC}}
\newcommand{\VCH}{\VC(\mathcal{H})}
\newcommand{\RLossLong}{\mathcal{L}_\rho(\C,\H)}
\newcommand{\RLoss}{\mathcal{L}_\rho(\C)}
\newcommand{\Rloss}{\ell_\rho(c,h)}
\newcommand{\RVClong}{\VC(\RLossLong)}
\newcommand{\RVC}{\VC(\RLoss)}
\newcommand{\DVCH}{\VC^*(\mathcal{H})}
\newcommand{\Lit}{\mathsf{Lit}}
\newcommand{\A}{\mathcal{A}}
\newcommand{\C}{\mathcal{C}}
\newcommand{\D}{\mathcal{D}}
\newcommand{\E}{\mathcal{E}}
\newcommand{\F}{\mathcal{F}}
\renewcommand{\H}{\mathcal{H}}
\newcommand{\U}{\mathcal{U}}
\newcommand{\X}{\mathcal{X}}
\newcommand{\x}{\mathbf{x}}
\newcommand{\vv}{\mathbf{v}}
\newcommand{\MonConj}{\textsf{MON-CONJ}}
\newcommand{\Conj}{\textsf{CONJUNCTIONS}}
\newcommand{\Halfspaces}{\textsf{LTF}}
\newcommand{\thresholds}{\mathsf{THRESHOLDS}}
\newtheorem{theorem}{Theorem}
\newtheorem{proposition}[theorem]{Proposition}
\newtheorem{lemma}[theorem]{Lemma}
\newtheorem{corollary}[theorem]{Corollary}
\newtheorem{example}[theorem]{Example}
\newtheorem{definition}[theorem]{Definition}
\theoremstyle{remark}
\newtheorem{remark}[theorem]{Remark}
\title{When are Local Queries Useful for Robust Learning?}
\date{}
\author{Pascale Gourdeau}
\author{Varun Kanade}
\author{Marta Kwiatkowska}
\author{James Worrell}
\affil{University of Oxford}
\begin{document}

\maketitle

\begin{abstract}
  Distributional assumptions have been shown to be necessary for the robust 
  learnability of concept classes when considering the exact-in-the-ball robust risk and access to random examples by Gourdeau et al. (2019).
  In this paper, we study learning models where the learner is given more power 
  through the use of \emph{local} queries, and give the first \emph{distribution-free} algorithms that perform robust empirical risk minimization (ERM) for this notion of robustness. 
  The first learning model we consider uses local membership queries (LMQ), where the 
  learner can query the label of points near the training sample.
  We show that, under the uniform distribution, LMQs do not increase the robustness threshold of conjunctions and any superclass, e.g., decision lists and halfspaces.
  Faced with this negative result, we introduce the local \emph{equivalence} query ($\LEQ$) oracle, which returns whether the hypothesis and target concept agree in the perturbation region around a point in the training sample, as well as a counterexample if it exists.
  We show a separation result: on the one hand, if the query radius $\lambda$ is strictly smaller than the adversary's  perturbation budget $\rho$, then distribution-free robust learning is impossible for a wide variety of concept classes; on the other hand, the setting $\lambda=\rho$ allows us to develop robust ERM algorithms.
  We then bound the query complexity of these algorithms based on online learning guarantees and further improve these bounds for the special case of conjunctions. 
We finish by giving robust learning algorithms for halfspaces on $\boolhc$ and then obtaining robustness guarantees for halfspaces in $\mathbb{R}^n$ against \emph{precision-bounded} adversaries.
\end{abstract}

\section{Introduction}

Adversarial examples have been widely studied since the work of \citep{dalvi2004adversarial,lowd2005adversarial,lowd2005good}, and later \citep{biggio2013evasion,szegedy2013intriguing}, the latter having coined the term.
As presented in \cite{biggio2017wild}, two main settings exist for adversarial machine learning: \emph{evasion}  attacks, where an adversary perturbs data at test time, and \emph{poisoning} attacks, where the data is modified at training time.

The majority of the guarantees and impossibility results for evasion attacks are based on the existence of adversarial examples, potentially crafted by an all-powerful adversary. 
However, what is considered to be an adversarial example has been defined in two different, and in some respects contradictory, ways in the literature. 
The \emph{exact-in-the-ball} notion of robustness (also known as
\emph{error region} risk in \cite{diochnos2018adversarial}) requires that the hypothesis and the ground truth agree in the perturbation region around each test point; the ground truth must thus be specified on all
input points in the perturbation region.
On the other hand, the
constant-in-the-ball notion of robustness (which is also known as
\emph{corrupted input} robustness from the work of \cite{feige2015learning}) requires that the
unperturbed point be correctly classified and that the points in the perturbation region share its label, meaning that we only need access to the test point labels; see, e.g., \citep{diochnos2018adversarial,dreossi2019formalization,gourdeau2021hardness,pydi2021many} for thorough discussions on the subject.

We study the problem of robust classification against evasion attacks under the exact-in-the-ball definition of robustness.
Previous work for this problem, e.g., \citep{diochnos2020lower,gourdeau2021hardness}, has considered the  setting where the learner only has access to random examples. 
However, many defences against evasion attacks have used adversarial training, the practice by which a dataset is augmented with previously misclassified points. 
Moreover, in the learning theory literature, some learning models give more power to the learner, e.g., by using membership and equivalence queries.
Our work studies the robust learning problem mentioned above from a learning theory point of view, and investigates  the power of \emph{local queries} in this setting.

\subsection{Our Contributions}

We outline our contributions below. 
All our results use the \emph{exact-in-the-ball} definition of robustness.
Conceptually, we study the powers and limitations of robust learning with access to oracles that only reveal information nearby the training sample.
Our results are particularly relevant as they contrast with the impossibility of robust learning in the \emph{distribution-free} setting when only random examples are given, as demonstrated in \cite{gourdeau2019hardness}.

\paragraph*{Limitations of the Local Membership Query Model.} In the local membership query ($\LMQ$) model, the learner is allowed to query the label of points in the vicinity of the training sample.
This model was introduced by \cite{awasthi2013learning} and shown to guarantee the PAC learnability of  various concept classes (which are believed or known to be hard to learn with only random examples) under distributional assumptions. 
However, we show that LMQs do not improve the robustness threshold of the class of conjunctions under the uniform distribution.
Indeed, any $\rho$-robust learning algorithm will need a joint sample and query complexity that is exponential in $\rho$, and thus superpolynomial in the input dimension $n$ against an adversary that can flip $\rho = \omega(\log n)$ input bits. 

\paragraph*{The Local Equivalence Query Model.}
Faced with the query lower bound for the $\LMQ$ above, one may consider giving a different power to the learner to improve robust learning guarantees. 
We thus introduce the local equivalence query ($\LEQ$) model, where the learner is allowed to query whether a hypothesis and the ground truth agree in the vicinity of points in the training sample.
The $\LEQ$ oracle is the natural exact-in-the-ball analogue of the Perfect Attack Oracle introduced in \cite{montasser2021adversarially}, which was developed for the constant-in-the-ball robustness.
It is also a variant of Angluin's equivalence query oracle \citep{angluin1987learning}.

\paragraph*{Distribution-Free Robust ERM with an $\LEQ$ Oracle.}
We show that having access to a \emph{robustly} consistent learner (i.e., one that can get zero robust risk on the training sample) gives sample complexity upper bounds that are logarithmic in the size of the hypothesis class or linear in the VC dimension of the robust loss--a complexity measure akin to the adversarial VC dimension of \cite{cullina2018pac}, which we adapted for our notion of robustness.
We study the setting where the learner has access to random examples and an $\LEQ$ oracle.
In the case where the query radius $\lambda$ of the $\LEQ$ oracle is strictly smaller than the adversarial perturbation budget $\rho$, we show that, for a wide variety of concept classes, distribution-free robust learning is impossible, regardless of the training sample size. 
In contrast, when $\lambda=\rho$ we exhibit robustly consistent learners that use an $\LEQ$ oracle.
This separation result further validates the need for an $\LEQ$ oracle in the distribution-free setting.
We furthermore use online learning setting results to exhibit upper bounds on the $\LEQ$ oracle query complexity and then improve these bounds in the specific case of conjunctions.
Finally, we study the sample and query complexity of halfspaces on $\boolhc$, giving robustness guarantees in this case.
For halfspaces in $\R^n$, we obtain sample complexity bounds for linear classifiers against bounded $\ell_2$-norm adversaries. 
To obtain query complexity bounds, we more generally consider adversaries with \emph{bounded precision}, and exhibit robust learning algorithms for this setting.
To the best of our knowledge, the results presented in this paper feature the first robust empirical risk minimization (ERM) algorithms for the \emph{exact-in-the-ball} robust risk in the literature.\footnote{Note that previous work, e.g., \cite{gourdeau2021hardness}, used PAC learning algorithms as black boxes, which are not in general robust risk minimizers, unless they also happen to be exact learning algorithms, and that \citep{montasser2019vc,montasser2021adversarially} use the constant-in-the-ball definition of robustness.}

\subsection{Related Work}

\paragraph*{Learning with Membership and Equivalence Queries.} Membership and equivalence queries (MQ and EQ, respectively) have been widely used in learning theory. 
Membership queries allow the learner to query the label of any point in the input space $\X$, namely, if the target concept is $c$, $\MQ$ returns $c(x)$ when queried with $x\in\X$.
Recall that, in the probably approximately correct ($\PAC$) learning model of \citet{valiant1984theory}, the learner has access to the example oracle $\EX(c,D)$, which upon being queried returns a point $x\sim D$ sampled from the underlying distribution and its label $c(x)$, and the goal is to output $h$ such that with high probability $h$ has low error.\footnote{This is known as the realizable setting. It is also possible to have an arbitrary joint distribution over the examples and labels, in which case we are working in the \emph{agnostic} setting.}
The $\EQ$ oracle takes as input a hypothesis $h$ and returns whether $h=c$, and provides a counterexample $z$ such that $h(z)\neq c(z)$ when $c\neq h$.
The seminal work of \citet{angluin1987learning} showed that deterministic finite automata (DFA) are exactly learnable with a polynomial number of queries to $\MQ$ and $\EQ$ in the size of the DFA. 
Many classes were then shown to be learnable in this setting as well as  others, see e.g., 
\citep{bshouty1993exact,angluin1988queries,jackson1997efficient}.
Moreover, the $\MQ + \EQ$ model has recently been used to extract weighted automata from recurrent neural networks \citep{weiss2018extracting,weiss2019learning,okudono2020weighted}, for binarized neural network verification \citep{shih2019verifying}, and interpretability \citep{camacho2019learning}.
But even these powerful learning models have limitations: learning DFAs only with $\EQ$ is hard \citep{angluin1990negative} and, under cryptographic assumptions, they are also hard to learn solely with the $\MQ$ oracle \citep{angluin1995when}.
It is also worth noting that the $\MQ$ learning model has been criticized by the applied machine learning community, as labels can be queried in the whole input space, irrespective of the distribution that generates the data.
In particular, \citep{baum1992query} observed that query points generated by a learning algorithm on the handwritten characters often appeared meaningless to human labelers.
\citet{awasthi2013learning} thus offered an alternative learning model to Valiant's original model, the PAC and local membership query ($\EX+\LMQ$) model, where the learning algorithm is only allowed to query the label of points that are close to examples from the training sample.
\citet{bary2020distribution} later showed that many concept classes, including DFAs, remain hard to learn in the $\EX+\LMQ$. 

\paragraph*{Existence of Adversarial Examples.}
It has been shown that, in many instances, the vulnerability of learning models to adversarial examples is inevitable due to the nature of the learning problem.
The majority of the results have been shown for the constant-in-the-ball notion of robustness, see e.g., \citep{fawzi2016robustness,fawzi2018adversarial,fawzi2018analysis,gilmer2018adversarial,shafahi2018adversarial,tsipras2019robustness}.
As for the exact-in-the-ball definition of robustness, \citet{diochnos2018adversarial} consider the robustness of monotone conjunctions under the uniform distribution. 
Using the isoperimetric inequality for the boolean hypercube, they show that an adversary that can perturb up to $O(\sqrt n)$ bits can increase the misclassification error from 0.01 to $1/2$. 
\citet{mahloujifar2019curse} then generalize this result to Normal Lévy families and a class of well-behaved classification problems (i.e., ones where the error regions are measurable and average distances exist).

\paragraph*{Sample Complexity of Robust Learning.}
Our work uses a similar approach to \citet{cullina2018pac}, who define the notion of adversarial VC dimension to derive sample complexity upper bounds for robust ERM algorithms, with respect to the constant-in-the-ball robust risk. 
\citet{montasser2019vc} use the same notion of robustness and show sample complexity upper bounds for robust ERM algorithms that are polynomial in the VC and dual VC dimensions of concept classes, giving general upper bounds that are exponential in the VC dimension--though they sometimes must be achieved by an improper learner. 
\citet{ashtiani2020black} build on their work and delineate when proper robust learning is possible. 
On the other hand, \citep{khim2019adversarial,yin2019rademacher,awasthi2020adversarial} study \emph{adversarial} Rademacher complexity bounds for robust learning, giving results for linear classifiers and neural networks when the robust risk can be minimized (in practice, this is approximated with adversarial training).
\citet{viallard2021pac} derive PAC-Bayesian generalization bounds for the averaged risk on the perturbations, rather than working in a worst-case scenario.
As for the exact-in-the-ball definition of robustness,  \citet{diochnos2020lower} show that, for a wide family of concept classes, any learning algorithm that is robust against all $\rho=o(n)$ attacks must have a sample complexity that is at least an exponential in the input dimension $n$. 
They also show a superpolynomial lower bound in case $\rho=\Theta( \sqrt n)$.
\citet{gourdeau2019hardness} show that distribution-free robust learning is generally impossible.
They also show that monotone conjunctions have a robustness threshold of $\Theta(\log n)$ under log-Lipschitz distributions,\footnote{A distribution is log-Lipschitz if the logarithm of the density function is $\log(\alpha)$-Lipschitz w.r.t. the Hamming distance.} meaning that this class is efficiently robustly learnable  against an adversary that can perturb $\log n$ bits of the input, but if an adversary is allowed to perturb $\rho = \omega(\log n)$ bits of the input, there does not exist a sample-efficient learning algorithm for this problem.
\citet{gourdeau2021hardness} extended this result to the class of monotone decision lists and \citet{gourdeau2022sample} showed a sample complexity lower bound for monotone conjunctions that is exponential in $\rho$ and that the robustness threshold of decision lists is also $\Theta(\log n)$.
Finally, \citet{diakonikolas2020complexity} and \citet{bhattacharjee2021sample} have used online learning algorithms for robust learning with respect to the constant-in-the-ball notion of robustness.

\paragraph*{Restricting the Power of the Learner and the Adversary.}
Most adversarial learning guarantees and impossibility results in the literature have focused on all-powerful adversaries.
Recent work has studied learning problems where the adversary's power is curtailed, e.g, \citet{mahloujifar2019can} and \citet{garg2020adversarially} study the robustness of classifiers to polynomial-time attacks. 
Closest to our work, \cite{montasser2020reducing,montasser2021adversarially} study the sample and query complexity of robust learning with respect to the constant-in-the-ball robust risk when the learner has access to a Perfect Attack Oracle (PAO).
For a perturbation type $\U:\X\rightarrow 2^\X$, hypothesis $h$ and labelled point $(x,y)$, the PAO returns the constant-in-the-ball robust loss of $h$ in the perturbation region $\U(x)$ and a counterexample $z$ where $h(z)\neq y$ if it exists.
Our $\LEQ$ oracle is the natural analogue of the PAO oracle for our notion of robustness.
In the constant-in-the-ball \emph{realizable} setting,\footnote{\label{note:realizable-c-i-b}In the realizable setting, there exists a hypothesis that has zero constant-in-the-ball robust loss.} the authors use online learning results to show sample and query complexity bounds that are linear and quadratic in the Littlestone dimension of concept classes, respectively \citep{montasser2020reducing}.
\citet{montasser2021adversarially} moreover use the algorithm from \citep{montasser2019vc} to get a sample complexity of $\tilde{O}\left(\frac{\VC(\mathcal{H}){\VC^*}^2(\mathcal{H})+\log(1/\delta)}{\epsilon}\right)$ and query complexity of $\tilde{O}(2^{\VCH^2 \DVCH^2\log^2(\DVCH)}\Lit(\mathcal{H}))$, where $\DVCH$ is the dual VC dimension of the hypothesis class $\H$.
Finally, they extend their results to the agnostic setting and derive lower bounds. 
As in the setting with having only access to the example oracle,  different notions of robustness have vastly different implications in terms of robust learnability of certain concept classes.
Whenever relevant, we will draw a thorough comparison in the next sections between our work and that of \citet{montasser2021adversarially}.

\section{Problem Set-Up}

We work in the PAC learning framework (see Appendix~\ref{app:pac}), with the distinction that a robust risk function is used instead of the standard risk.
We will study metric spaces $(\X_n,d)$ of input dimension $n$ with a perturbation budget function $\rho:\N\rightarrow\R$ defining the perturbation region $B_\rho(x):=\set{z\in\X_n\given d(x,z)\leq\rho(n)}$. 
When the input space is the boolean hypercube $\X_n=\boolhc$, the metric is the Hamming distance.

We use the exact-in-the-ball robust risk, which is defined w.r.t. a hypothesis $h$, target $c$ and distribution $D$ as the probability $\risk_\rho^D(h,c):=\Prob{x\sim D}{\exists z\in B_\rho(x) \st c(z) \neq h(z)}$ that $h$ and $c$ disagree in the perturbation region.
On the other hand, the constant-in-the-ball robust risk is defined as $\Prob{x\sim D}{\exists z\in B_\rho(x) \st c(x) \neq h(z)}$.
Note that it is possible to adapt the latter to a joint distribution on the input and label spaces, but that there is an implicit \emph{realizability assumption} in the former as the prediction on perturbed points' labels are compared to the ground truth $c$.
We emphasize that choosing a robust risk function should depend on the learning problem at hand.
The constant-in-the-ball notion of robustness requires a certain form of \emph{stability}: the hypothesis should be correct on a random example and not change label in the perturbation region; this robust risk function may be more appropriate in settings with a strong margin assumption. 
In contrast, the exact-in-the-ball notion of robustness speaks to the \emph{fidelity} of the hypothesis to the ground truth, and may be more suitable when a considerable portion of the probability mass is in the vicinity of the decision boundary.
\citet{diochnos2018adversarial,dreossi2019formalization,gourdeau2021hardness,pydi2021many} offer a thorough comparison between different notions of robustness.

In the face of the impossibility or hardness of robustly learning certain concept classes, either through statistical or computational limitations, it is natural to study whether these issues can be circumvented by giving more power to the learner.
The $\lambda$-local membership query ($\lambda$-$\LMQ$) set up of \cite{awasthi2013learning}, which is formally defined in Appendix~\ref{app:lmq}, allows the learner to query the label of points that are at distance at most $\lambda$ from a sample $S$ drawn randomly from $D$.
Inspired by this learning model, we define  the $\lambda$-local equivalence query ($\lambda$-$\LEQ$) model where, for a point $x$  in a sample $S$ drawn from the underlying distribution $D$, the learner is allowed to query an oracle that returns whether $h$ agrees with the ground truth $c$ in the ball $B_\lambda(x)$ of radius $\lambda$ around $x$.\footnote{Similarly to $\rho$, we implicitly consider $\lambda$ as a function of the input dimension $n$. It is also possible to extend this definition to an arbitrary perturbation function $\U:\X\rightarrow 2^\X$.} 
If they disagree, a counterexample in $B_\lambda(x)$ is returned as well.
Clearly, by setting $\lambda=n$, we recover the $\EQ$ oracle.\footnote{This is evidently not the case for the Perfect Attack Oracle of \cite{montasser2021adversarially}.}
Note moreover that when $\lambda=\rho$, this is equivalent to querying the (exact-in-the-ball) robust loss around a point.
We will show a separation result for robust learning algorithms between models that only allow random examples and ones that allow random examples and access to $\LEQ$.

\begin{definition}[$\lambda$-$\LEQ$ Robust Learning]
\label{def:leq}
Let $\X_n$ be the instance space, $\C$ a concept class over $\X_n$, and $\D$ a class of distributions over $\X_n$. We say that $\C$ is $\rho$-robustly learnable using $\lambda$-local equivalence queries with respect to distribution class, $\D$, if there exists a learning algorithm, $\A$, such that for every $\epsilon > 0$, $\delta > 0$, for every distribution $D\in\D$ and every target concept $c\in\C$, the following hold:\footnote{We implicitly assume that a concept $c\in\C$ can be represented in size polynomial in $n$, where $n$ is the input dimension; otherwise a parameter $size(c)$ can be introduced in the sample and query complexity requirements.}
\begin{enumerate}
\item $\A$ draws a sample $S$ of size $m = \poly(n, 1/\delta, 1/\epsilon)$ using the example oracle $\EX (c, D)$
\item Each query made by $\A$ at $x \in S$ and for a candidate hypothesis $h$ to $\lambda$-$\LEQ$ either confirms that $c$ and $h$ coincide on $B_\lambda(x)$ or returns $z\in B_\lambda(x)$ such that $c(z) \neq h(z)$. $\A$ is allowed to update $h$ after seeing a counterexample
\item  $\A$ outputs a hypothesis $h$ that satisfies $\risk_\rho^D(h,c)\leq \epsilon$ with probability at least $1-\delta$ 
\item The running time of $\A$ (hence also the number of oracle accesses) is polynomial in $n$, $1/\epsilon$, $1/\delta$ and the output hypothesis $h$ is polynomially evaluable.
\end{enumerate}
\end{definition}

We remark that this model evokes the online learning setting, where the learner receives counterexamples after making a prediction, but with a few key differences. 
Contrary to the online setting (and the exact learning framework with $\MQ$ and $\EQ$), there is an underlying distribution with which the performance of the hypothesis is evaluated in both the $\LMQ$ and $\LEQ$ models.
Moreover, in online learning, when receiving a counterexample, the only requirement is that there is a concept that correctly classifies all the data given to the learner up until that point, and so the counterexamples can be given in an \emph{adversarial} fashion, in order to maximize the regret. 
However, both the $\LMQ$ and $\LEQ$ models require that a target concept be chosen a priori.
Note though that the $\LEQ$ oracle can give any counterexample for the robust loss at a given point.

In practice, one always has to find a way to approximately implement oracles studied in theory.
A possible way to generate counterexamples with respect to the exact-in-the-ball notion of robustness is as follows.
Suppose that there is an adversary that can generate points $z\in B_\rho(x)$ such that $h(z)\neq c(z)$.
Provided such an adversary can be simulated, there is a way to (imperfectly) implement the $\LEQ$ oracle in practice. 
Thus, the use of these oracles can be viewed as a form of adversarial training.

Both the $\LMQ$ and $\LEQ$ models are particularly well-suited for the standard and exact-in-the-ball risks, as they address \emph{information-theoretic} limitations of learning with random examples only. 
 On the other hand, while information-theoretic limitations of robust learning with respect to the \emph{constant-in-the-ball} notion of robustness arise when the perturbation function $\U$ is unknown to the learner, \emph{computational} obstacles can also occur even when the definition of $\U$ is available. 
Indeed, determining whether the hypothesis changes label in the perturbation region could  be intractable.
In these cases, the Perfect Attack Oracle of \cite{montasser2021adversarially} can be used to remedy these limitations for robust learning with respect to the constant-in-the-ball robust risk.
Crucially, in their setting, counterexamples could have a different label to the ground truth: a counterexample $z\in\U(x)$ for $x$ is such that $h(z)\neq c(x)$, not necessarily $h(z)\neq c(z)$. This could compromise the standard accuracy of the hypothesis (see e.g., \cite{tsipras2019robustness} for a learning problem where robustness and accuracy are at odds).
Finally, an $\LMQ$ analogue for the constant-in-the-ball risk is not needed: the only information we need for a perturbed point $z\in B_\rho(x)$  is the label of $x$ (given by the example oracle) and $h(z)$. 
Given that one of the requirements of PAC learning is that the hypothesis is efficiently evaluatable, we can easily compute $h(z)$.

\section{Distribution-Free Robust Learning with Local Equivalence Queries}
\label{sec:df-rob}

In this section, we show that having access to a local equivalence query oracle can guarantee the efficient \emph{distribution-free} robust learnability of certain concept classes. 
We start with a negative result which shows that for a wide variety of concept classes, if $\lambda<\rho$, then \emph{distribution-free} robust learnability is impossible with $\EX+\lambda$-$\LEQ$ -- regardless of how many queries are allowed. 
However, the regime $\lambda=\rho$, which implies giving similar power to the learner as the adversary, enables robust learnability guarantees.
Indeed, Section~\ref{sec:sample-compl} exhibits upper bounds on sample sizes that will guarantee \emph{robust} generalization. 
These bounds are logarithmic in the size of the hypothesis class (finite case) and linear in the \emph{robust} VC dimension of a concept class (infinite case). 
Section~\ref{sec:online} draws a comparison between our framework and the online learning setting, and exhibits robustly consistent learners. 
Section~\ref{sec:conj} studies the class of conjunctions and presents a robust learning algorithm that is \emph{both} statistically and computationally efficient. 
Finally, Section~\ref{sec:halfspaces} looks at linear classifiers in the discrete and continuous cases, and adapts the Winnow and Perceptron algorithms to both settings. 

\subsection{Impossibility of Distribution-Free Robust Learning When  $\lambda<\rho$}

We start with a negative result, saying that whenever the local query radius is strictly smaller than the adversary's budget, monotone conjunctions are not distribution-free robustly learnable, which is in contrast to the standard PAC setting where guarantees hold \emph{for any distribution}.
Note that our result goes beyond efficiency: no query can distinguish between two potential targets. 
Choosing the target uniformly at random lower bounds the expected robust risk, and hence renders robust learning impossible in this setting. 

\begin{theorem}
\label{thm:mon-conj-df-leq}
For locality and robustness parameters $\lambda,\rho\in\N$ with $\lambda < \rho $, monotone conjunctions (and any superclass) are not distribution-free $\rho$-robustly learnable with access to a $\lambda$-$\LEQ$ oracle.
\end{theorem}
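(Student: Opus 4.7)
The plan is to exhibit a pair of monotone conjunctions whose labellings are indistinguishable to both the example oracle and the $\lambda$-$\LEQ$ oracle, yet which force any hypothesis to incur large robust risk on one of them. Assume $n\geq 2\rho$ and let $D$ be the point mass at $x_0=0^n\in\boolhc$. Define two targets $c_1=x_1\wedge\cdots\wedge x_\rho$ and $c_2=x_{\rho+1}\wedge\cdots\wedge x_{2\rho}$; both lie in $\MonConj$, and hence in any superclass.

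First I would show indistinguishability. Since each $c_i$ is a conjunction of $\rho$ literals that all evaluate to $0$ at $x_0$, we have $c_1(z)=c_2(z)=0$ for every $z$ with Hamming weight at most $\lambda<\rho$, i.e.\ $c_1\equiv c_2$ on $B_\lambda(x_0)$. Consequently, samples drawn from $\mathsf{EX}(c_1,D)$ and $\mathsf{EX}(c_2,D)$ are identically distributed (both return $(x_0,0)$ with probability $1$), and for any candidate hypothesis $h$, the $\lambda$-$\LEQ$ oracle at $x_0$ returns either ``agree on $B_\lambda(x_0)$'' under both targets, or a point $z\in B_\lambda(x_0)$ with $h(z)\neq 0$, which is a valid counterexample for both $c_1$ and $c_2$. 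Hence the transcript observed by any (possibly randomized) learner has the same distribution whether the target is $c_1$ or $c_2$.

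Next I would lower bound the robust risk via a two-point adversarial witness. Let $z_1$ be the point with the first $\rho$ bits set to $1$ and the rest to $0$, and $z_2$ the point with bits $\rho+1,\ldots,2\rho$ set. Both are in $B_\rho(x_0)$, and $c_1(z_1)=1$, $c_2(z_1)=0$, $c_1(z_2)=0$, $c_2(z_2)=1$. Whatever value $h(z_1)\in\{0,1\}$ takes, it disagrees with exactly one of $c_1(z_1)$, $c_2(z_1)$, so there is always an adversarial perturbation of $x_0$ witnessing a mistake against at least one of the two targets. Since $D$ is concentrated at $x_0$, this gives $\risk_\rho^D(h,c_1)+\risk_\rho^D(h,c_2)\geq 1$ for every $h$.

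To finish, I would combine the two steps by a standard averaging argument: picking the target uniformly from $\{c_1,c_2\}$ and using the fact that the learner's output distribution is identical under both targets yields $\mathbb{E}_{c}\mathbb{E}_{A}[\risk_\rho^D(A,c)]\geq 1/2$, so there is a target on which any learner outputs a hypothesis with robust risk at least $1/2$ with probability at least $1/2$, ruling out $(\epsilon,\delta)$-robust learning for, say, $\epsilon=\delta=1/4$, regardless of sample or query complexity. The main subtlety I anticipate is being careful that the $\lambda$-$\LEQ$ oracle is allowed to return \emph{any} valid counterexample; one must argue that the adversarial choice of oracle responses can be taken to be target-independent, which follows here because any $z\in B_\lambda(x_0)$ with $h(z)\neq 0$ is simultaneously a counterexample for both $c_1$ and $c_2$. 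Extending to superclasses of $\MonConj$ is immediate since the hard instances $c_1,c_2$ remain available.
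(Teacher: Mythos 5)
Your proposal is correct and follows essentially the same route as the paper's proof: a point mass at $\mathbf{0}$, two monotone conjunctions that coincide on $B_\lambda(\mathbf{0})$ but disagree at a point of $B_\rho(\mathbf{0})$, and an averaging argument over a uniformly random choice of target. The only cosmetic differences are that the paper uses the nested pair $\bigwedge_{i\le\rho}x_i$ and $\bigwedge_{i\le\rho+1}x_i$ and invokes its triangle-inequality lemma for the robust risk, whereas you use two disjoint length-$\rho$ conjunctions and establish $\risk_\rho^D(h,c_1)+\risk_\rho^D(h,c_2)\geq 1$ directly from a single witness point; your explicit remark that any $\LEQ$ counterexample is simultaneously valid for both targets is a welcome clarification of a step the paper leaves implicit.
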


\begin{proof}
Fix $\lambda,\rho\in\N$ such that $\lambda < \rho $, and consider the following monotone conjunctions: $c_1(x)=\bigwedge_{1 \leq i \leq \rho} x_i$ and $c_2(x)=\bigwedge_{1 \leq i \leq \rho +1} x_i$.
Let $D$ be the distribution on $\boolhc$ which puts all the mass on $\mathbf{0}$.
Then, the target concept is drawn at random between $c_1$ and $c_2$. 
Now, $c_1$ and $c_2$ will both give all points in $B_\lambda(\mathbf{0})$ the label 0, so the learner has to choose a hypothesis that is consistent with both $c_1$ and $c_2$ (otherwise the robust risk is 1 and we are done). 
However, the learner has no way of distinguishing which of $c_1$ or $c_2$ is the target concept, while these two functions have a $\rho$-robust risk of 1 against each other under $D$.
Formally, 
\begin{align}
\risk_\rho^D(c_1,c_2) 
&= \Prob{x\sim D}{\exists z\in B_\rho(x)\st c_1(z)\neq c_2(z)}\notag\\
&= \mathbf{1}[\exists z\in B_\rho(\mathbf{0})\st c_1(z)\neq c_2(z)]\notag\\ 
&=1 \label{eqn:mon-conj-risk-1}
\enspace,
\end{align}
where such $z=\mathbf{1}_\rho\mathbf{0}_{n-\rho}$.
To lower bound the expected robust risk, letting $\A$ be any learning algorithm and $\E$ be the event that all points in a randomly drawn sample $S$ are all labeled 0, we have
\begin{align*}
\eval{c,S}{\risk_\rho^D(\A(S),c)}
&= \eval{c,S}{\risk_\rho^D(\A(S),c)\given \E} \tag{By construction of $D$}
 \\
&= \frac{1}{2}\;\eval{S}{\risk_\rho^D(\A(S),c_1)+\risk_\rho^D(\A(S),c_2)\given \E} 
\tag{Random choice of $c$} \\
&\geq\frac{1}{2}\;\eval{S}{\risk_\rho^D(c_1,c_2)\given \E} \tag{Lemma~\ref{lemma:rob-triangle}}\\
&=\frac{1}{2} \tag{Equation~\ref{eqn:mon-conj-risk-1}}
\enspace.
\end{align*}
\end{proof}

The result holds for monotone conjunctions and all superclasses (e.g., decision lists and halfspaces), but, in fact, we can generalize this reasoning for any concept class that has a certain form of stability: 
if we can find concepts $c_1$ and $c_2$ in $\C$ and points $x,x'\in \X$ such that $c_1$ and $c_2$ agree on $B_\lambda(x)$ but disagree on $x'$, then if $\lambda < \rho$, the concept class $\C$ is not distribution-free $\rho$-robustly learnable with access to a $\lambda$-$\LEQ$ oracle.
It suffices to ``move'' the center of the ball $x$ until we find a point in the set $B_\rho(x)\setminus B_\lambda(x)$ where $c_1$ and $c_2$ disagree, which is guaranteed to happen by the existence of $x'$.

\subsection{General Sample Complexity Bounds for Robustly-Consistent Learners}
\label{sec:sample-compl}

In this section, we show that we can derive sample complexity upper bounds for \emph{robustly} consistent learners, i.e., learning algorithms that return a \emph{robust} loss of zero on a training sample.
Note that, crucially,  the exact-in-the-ball notion of robustness and its realizability imply that any robust ERM algorithm will achieve zero empirical robust loss on a given training sample.
As we will see in the next sections, the challenge is to find a \emph{robustly} consistent learning algorithm that uses queries to $\rho$-$\LEQ$.
The first bound is for finite classes, where the dependency is logarithmic in the size of the hypothesis class. 
The proof is a simple application of Occam's razor and is included in Appendix~\ref{app:sample-compl} for completeness.
The argument is similar to \cite{bubeck2019adversarial}.

\begin{lemma}
\label{lemma:occam}
Let $\C$ be a concept class and $\mathcal{H}$ a hypothesis class.
Any $\rho$-robust ERM algorithm using $\mathcal{H}\supseteq\C$ on a sample of size $m\geq \frac{1}{\epsilon}\left(\log |\mathcal{H}_n|+\log\frac{1}{\delta}\right)$ is a $\rho$-robust learner for  $\C$.
\end{lemma}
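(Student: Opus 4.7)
The plan is to run the standard Occam's razor / realizable PAC argument, carefully verified for the exact-in-the-ball robust risk. First, I would fix an arbitrary ``bad'' hypothesis $h\in\mathcal{H}_n$, meaning one satisfying $\risk_\rho^D(h,c) > \epsilon$. By definition of $\risk_\rho^D$, the event ``$h$ and $c$ agree on all of $B_\rho(x)$'' for a single draw $x\sim D$ occurs with probability exactly $1-\risk_\rho^D(h,c) < 1-\epsilon$. Since the $m$ draws in $S$ are i.i.d., the probability that $h$ has zero robust empirical loss on $S$, i.e., that it agrees with $c$ on $B_\rho(x_i)$ for every $x_i\in S$, is at most $(1-\epsilon)^m \le e^{-\epsilon m}$.

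Next I would take a union bound over all bad hypotheses in $\mathcal{H}_n$, which gives
\[
\Pr_{S\sim D^m}\Bigl(\exists h\in\mathcal{H}_n \st \risk_\rho^D(h,c)>\epsilon \text{ and } h \text{ has zero robust empirical loss on } S\Bigr)
\le |\mathcal{H}_n|\, e^{-\epsilon m}.
\]
Setting this to be at most $\delta$ and solving for $m$ produces the claimed bound $m \ge \frac{1}{\epsilon}\bigl(\log|\mathcal{H}_n| + \log(1/\delta)\bigr)$.

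Finally I would tie this to robust ERM. Here the crucial observation, already highlighted in the paper, is that under the exact-in-the-ball definition the target $c$ itself achieves zero robust empirical loss on any sample (it agrees with itself on every perturbation region); hence a robust ERM learner over $\mathcal{H}$ (assuming $\C\subseteq\mathcal{H}$, or more generally that zero empirical robust loss is attainable in $\mathcal{H}$) must output some $h\in\mathcal{H}_n$ with zero empirical robust loss. Combining this with the union bound above, with probability at least $1-\delta$ no such $h$ is bad, and therefore the ERM output satisfies $\risk_\rho^D(h,c)\le \epsilon$, as required.

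There is no real obstacle: the only subtlety worth double-checking is that, for a fixed $h$ and fixed $c$, the event $\{z\in B_\rho(x) \st c(z)\neq h(z)\neq\emptyset\}$ is measurable and that its probability equals $\risk_\rho^D(h,c)$ directly from the definition, so that $(1-\epsilon)^m$ is a valid bound on the chance of survival through $m$ independent samples. Once this is in hand, the standard inequality $(1-\epsilon)^m\le e^{-\epsilon m}$ finishes the calculation.
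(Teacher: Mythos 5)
Your proposal is correct and follows essentially the same route as the paper's proof: define bad hypotheses as those with robust risk exceeding $\epsilon$, bound the probability that a fixed bad hypothesis is robustly consistent on $m$ i.i.d. examples by $(1-\epsilon)^m\leq e^{-\epsilon m}$, union-bound over $\mathcal{H}_n$, and invoke realizability of the exact-in-the-ball loss to guarantee the ERM output is robustly consistent. The only (harmless) differences are cosmetic: you restrict the union bound to bad hypotheses while the paper unions over all of $\mathcal{H}$, and you use a strict rather than weak inequality in the definition of ``bad.''
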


\begin{proof}
Fix a target concept $c\in\C$ and the target distribution $D$ over $\X$. 
Define a hypothesis $h$ to be ``bad'' if $R_\rho^D(c,h) \geq \epsilon$. 
Note that any robust ERM algorithm will be robustly consistent on the training sample by the realizability assumption.
Let $\E_h$ be the event that $m$ independent examples drawn from $\EX(c, D)$ are all robustly consistent with $h$. 
Then, if $h$ is bad, we have that $\Prob{}{\E_h}\leq (1-\epsilon)^m\leq e^{-\epsilon m}$.
Now consider the event $\E = \bigcup_{h\in\mathcal{H}}\E_h$.
We have that, by the union bound, $$\Prob{}{\E} \leq \sum_{h\in\mathcal{H}}\Prob{}{\E_h}\leq \abs{\mathcal{H}}e^{-\epsilon m}\enspace.$$
Then, bounding the RHS by $\delta$, we have that whenever $m\geq \frac{1}{\epsilon}\left(\log |\mathcal{H}_n|+\log\frac{1}{\delta}\right)$, no bad hypothesis is \emph{robustly} consistent with $m$ random examples drawn from $\EX(c, D)$. 
If a hypothesis is not bad, it has robust risk bounded above by $\epsilon$, as required.
\end{proof}

For the infinite case, we cannot immediately use the VC dimension as a tool for bounding the sample complexity of robust learning.
To this end, we use the VC dimension of the robust loss between two concepts, which is the VC dimension of the class of functions representing the $\rho$-expansion of the error region between any possible target and hypothesis.
This is analogous to the adversarial VC dimension defined by \cite{cullina2018pac} for the constant-in-the-ball definition of robustness. 

\begin{definition}[VC dimension of the exact-in-the-ball robust loss]
\label{def:rob-vc}
Given a target concept class $\C$, a hypothesis class $\mathcal{H}$ and a robustness parameter $\rho$, the VC dimension of the robust loss between $\C$ and $\H$ is defined as $\RVClong$, where $\RLossLong=\set{\Rloss: x\mapsto \mathbf{1}[\exists z\in B_\rho(x)\st c(z) \neq h(z)] \given c\in\C, h\in\mathcal{H}}$.
Whenever $\C=\mathcal{H}$, we simply write $\RVC$.
\end{definition}

We now show that we can use the VC dimension of the robust loss to upper bound the sample complexity of robustly-consistent learning algorithms. 

\begin{lemma}
\label{lemma:rob-vc}
Let $\C$ be a concept class and $\mathcal{H}$ a hypothesis class. Any $\rho$-robust ERM algorithm using $\mathcal{H}$ on a sample of size $m\geq \frac{\kappa_0}{\epsilon}\left(\RVClong\log(1/\epsilon)+\log\frac{1}{\delta}\right)$ is a $\rho$-robust learner for  $\C$.
\end{lemma}
%

\begin{proof}
The proof is very similar to the VC dimension upper bound in PAC learning.
The main distinction is that instead of looking at the error region of the target and any function in $\mathcal{H}$, we must look at its $\rho$-expansion.
Namely, we let the target $c\in\C$ be fixed and, for $h\in\mathcal{H}$, we consider the function $(c\oplus h)_\rho: x\mapsto \mathbf{1}[\exists z\in B_\rho(x)\st c(z) \neq h(z)]$ and define a new concept class $\Delta_{c,\rho}(\mathcal{H})=\set{(c\oplus h)_\rho \given h\in\mathcal{H}}$.
As we have that $\Delta_{c,\rho}(\mathcal{H})\subseteq\RLossLong$, it follows that $\VC(\Delta_{c,\rho}(\mathcal{H}))\leq \RVClong$ (any sign pattern achieved on the LHS can be achieved on the RHS).

The remainder of the proof follows from the definition of an $\epsilon$-net and the bound on the growth function of $\Delta_{c,\rho}(\mathcal{H})$.

First, define the class $\Delta_{c,\rho,\epsilon}(\mathcal{H})$ as $\set{\tilde{c}\in\Delta_{c,\rho}(\mathcal{H})\given \Prob{x\sim D}{\tilde{c}(x)=1}\geq \epsilon}$, i.e., the set of functions in $\Delta_{c,\rho}(\mathcal{H})$ which have a robust risk greater than $\epsilon$. 
Recall that a set $S$ is an $\epsilon$-net for $\Delta_{c,\rho}(\mathcal{H})$ if for every $\tilde{c}\in \Delta_{c,\rho,\epsilon}(\mathcal{H})$, there exists $x\in S$ such that $\tilde{c}(x)=1$. 
We want to bound the probability that a sample $S\sim D^m$ fails to be an $\epsilon$-net for the class $\Delta_{c,\rho}(\mathcal{H})$, as if $S$ is an $\epsilon$-net, then any robustly consistent $h\in\mathcal{H}$ on $S$ will have robust risk bounded above by $\epsilon$.
As with the standard VC dimension, a sample $S$ will be drawn in two phases.
First draw a sample $S_1\sim D^m$ and let $\E_1$ be the event that $S_1$ is not an $\epsilon$-net for $\Delta_{c,\rho}(\mathcal{H})$. 
Now, suppose $\E_1$ occurs.
This means there exists $\tilde{c} \in \Delta_{c,\rho,\epsilon}(\mathcal{H})$ such that $\tilde{c}(x)=0$ for all the points $x\in S_1$.
Fix such a $\tilde{c}$ and draw a second sample $S_2\sim D^m$.
Then, letting $X$ be the random variable representing the number of points in $S_2$ that are such that $\tilde{c}(x)=1$, we can use Chernoff bound to show that
\begin{equation}
\label{eqn:chernoff-lb}
\Prob{}{X<\epsilon m /2}
\leq 2\exp \left(-\frac{\epsilon m}{12}\right)
\enspace,
\end{equation}
ensuring that whenever $\epsilon m \geq 24$, the probability that at least $\epsilon m/2$ points in $S_2$ satisfy $\tilde{c}(x)=1$ is bounded below by $1/2$.

Now, consider the event $\E_2$ where a sample $S=S_1 \cup S_2$ of size $2m$ such that $|S_1|=|S_2|=m$ is drawn from $\EX(c,D)$ and there exists a concept $\tilde{c}\in \Pi_{\Delta_{c,\rho,\epsilon}(\mathcal{H})}(S)$ such that $|\set{x\in S\given \tilde{c}(x)=1|\geq \epsilon m /2}$ and $\tilde{c}(x)=0$ for all $x\in S_1$, where $\Pi_{\Delta_{c,\rho,\epsilon}(\mathcal{H})}(S)$ is the set all possible dichotomies on $S$ induced by $\Delta_{c,\rho,\epsilon}(\mathcal{H})$.
Then $\Prob{}{\E_2}\geq \frac{1}{2}\Prob{}{\E_1}$ from Equation~\ref{eqn:chernoff-lb}.
Now, the probability that $\E_2$ happens for a fixed $\tilde{c}\in\Delta_{c,\rho,\epsilon}(\mathcal{H})$ is 
\begin{equation*}
\frac{{m\choose \epsilon m/2}}{{2m\choose \epsilon m/2}}
\leq 2^{-\epsilon m/2}
\enspace.
\end{equation*}
Finally, letting $d=\RVClong$ we can bound the probability of $\E_1$ using the union bound:
\begin{align*}
\Prob{}{\E_1}&\leq 2\Prob{}{\E_2} \\
&\leq 2 \abs{\Pi_{\Delta_{c,\rho,\epsilon}(\mathcal{H})}(S)} 2^{-\epsilon m/2}\\
&\leq 2 \abs{\Pi_{\Delta_{c,\rho}(\mathcal{H})}(S)} 2^{-\epsilon m/2}\\
&\leq 2 \left(\frac{2em}{d}\right)^d 2^{-\epsilon m/2} 
\enspace,
\end{align*}
where the last inequality is due to the Sauer-Shelah Lemma \citep{sauer1972density,shelah1972combinatorial}.
Thus, there exists a universal constant such that provided $m$ is larger than the bound given in the statement of the theorem, $\Prob{}{\E_1}<\delta$, as required.
\end{proof}

\begin{remark}
\label{rmk:rho-tradeoff}
For the boolean hypercube and the Hamming distance, note that, as $\rho(n)/n$ tends to $1$, we move towards the  exact and online learning settings, and the underlying distribution becomes less important. 
In this case, the VC dimension of the robust loss starts to decrease.
Indeed, say if $\rho=n$, then $(\C\oplus \C)_\rho$ only contains the constant functions $0$ and $1$. 
We thus only need a single example to query the $\LEQ$ oracle (which has become the $\EQ$ oracle).
However, this comes at a cost: the \emph{query complexity} upper bounds presented in the next sections could be tight.
Understanding the behaviour of the VC dimension of the robust loss as a function of $\rho$ and deriving joint sample and query complexity bounds are both avenues for future research.
\end{remark}

\subsection{Query Complexity Bounds Using Online Learning Results}
\label{sec:online}

In the previous section, we derived sample complexity upper bounds for robustly consistent learners.
The challenge is thus to create algorithms that perform robust empirical risk minimization, as we are operating in the realizable setting.
We begin by showing that, if one can ignore computational limitations, then online learning results can be used to guarantee robust learnability. 
We recall the online learning setting in Appendix~\ref{app:online}.
We denote by $\Lit(\C)$ the Littlestone dimension of a concept class $\C$, which is defined in Appendix~\ref{app:complexity} and appears in the query complexity bound in the theorem below.

\begin{theorem}
\label{thm:soa}
A concept class $\C$ is $\rho$-robustly learnable with the Standard Optimal Algorithm (SOA)  \citep{littlestone1988learning} using the $\EX$ and $\rho$-$\LEQ$ oracles with sample complexity $ m(n,\epsilon,\delta) = \frac{\kappa_0}{\epsilon}\left(\RVC\log(1/\epsilon)+\log\frac{1}{\delta}\right)$ for a sufficiently large constant $\kappa_0$, and query complexity $r (n,\epsilon,\delta) = m(n,\epsilon,\delta)\cdot \Lit(\C)$.
Furthermore, if $\C$ is a finite concept class on $\boolhc$, then $\C$ is $\rho$-robustly learnable with sample complexity $ m (n,\epsilon,\delta) = \frac{1}{\epsilon}\left(\log(|\C|)+\log\frac{1}{\delta}\right)$ and query complexity $r (n,\epsilon,\delta) = m(n,\epsilon,\delta)\cdot \Lit(\C) $.
\end{theorem}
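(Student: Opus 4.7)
The plan is to simulate the online learning setting via the $\rho$-$\LEQ$ oracle, letting the Standard Optimal Algorithm (SOA) digest each returned counterexample as an online mistake, and then to invoke the sample-complexity results of Section~\ref{sec:sample-compl} to upgrade empirical robust consistency to a PAC-style guarantee.

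Concretely, the algorithm I would use is as follows: initialize $h$ to SOA's default hypothesis, draw a sample $S$ of size $m$ from $\EX(c,D)$, and iterate the following main loop until a full pass produces no counterexample: for every $x\in S$, query $\rho$-$\LEQ$ on $(h,x)$; if it returns a counterexample $z\in B_\rho(x)$, feed $(z,c(z))$ to SOA to update $h$; otherwise continue. Once the loop exits, $h$ agrees with $c$ on $B_\rho(x)$ for every $x\in S$, so $h$ is a robust ERM on $S$. The sample bounds of Lemma~\ref{lemma:rob-vc} (general case) and Lemma~\ref{lemma:occam} (finite case) then yield $\risk_\rho^D(h,c)\le \epsilon$ with probability at least $1-\delta$ for the stated sample size $m$, which proves the sample-complexity part of the theorem.

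For the $m\cdot\Lit(\C)$ query bound, the key observation is that each counterexample returned by the $\LEQ$ oracle is a labelled point $(z,c(z))$ on which the current $h$ erred, and the whole sequence of such points fed to SOA is realizable by the target $c\in\C$; SOA's online mistake bound therefore caps the total number of counterexamples across the execution by $\Lit(\C)$. Since every pass through $S$ that does not terminate the algorithm contributes at least one counterexample, the number of passes is at most $\Lit(\C)+1$, and each pass costs at most $m$ queries, giving the stated bound.

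For the improved bound $m\cdot\rho\log(en)$ in the finite Boolean case, I would instead process sample points sequentially and, on each $x$, run a local halving-style subroutine over the set of labelings of $B_\rho(x)$ that are still consistent with the oracle's previous answers on this ball; every counterexample halves this local version space. The arithmetic then uses $|B_\rho(x)|\le\binom{n}{\le\rho}\le(en/\rho)^\rho$, so that $\log|B_\rho(x)|\le\rho\log(en/\rho)\le\rho\log(en)$. The main obstacle is making this per-point bound rigorous: a naive halving over $\{0,1\}^{B_\rho(x)}$ only yields $|B_\rho(x)|$ mistakes rather than $\log|B_\rho(x)|$, so one has to argue carefully that the $\LEQ$-interaction keeps the effective local version space small enough --- leveraging that only labelings realizable by some $c\in\C$ need to be tracked --- in order to drive the per-point count down to the desired $\rho\log(en)$.
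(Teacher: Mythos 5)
Your handling of the sample complexity and of the first query bound matches the paper's proof. The paper likewise runs SOA, feeds it each counterexample $z$ returned by $\rho$-$\LEQ$ with the inferred label $c(z)=1-h(z)$, notes that the resulting hypothesis is robustly consistent on $S$ so that Lemmas~\ref{lemma:occam} and~\ref{lemma:rob-vc} give the sample bounds, and caps the total number of counterexamples by SOA's mistake bound $\Lit(\C)$ because the counterexample sequence is realizable by the target. Your pass-counting argument makes the $m\cdot\Lit(\C)$ bookkeeping explicit where the paper leaves it implicit; that part is fine.

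For the bound $m\cdot\rho\log(en)$ you do not have a proof, and you say so. The paper's route is different from your local-halving idea: it keeps running SOA and argues that the counterexamples charged to a fixed sample point $x$ all lie in $B_\rho(x)$, so the number of mistakes per point is bounded by the depth of a Littlestone tree all of whose nodes lie in $B_\rho(x)$, which it asserts is at most $\log\abs{B_\rho(x)}\leq\rho\log(en)$. Be aware, though, that the difficulty you flag is exactly the crux, and the paper's one-line assertion does not dispose of it: a Littlestone tree over a domain of $N$ points has depth at most $N$ (instances must be distinct along any consistent root-to-leaf path), not $\log N$ in general, and for a rich finite class such as all Boolean functions one has $\Lit(\C|_{B_\rho(x)})=\abs{B_\rho(x)}$, which is $n^{\Theta(\rho)}$ rather than $\rho\log(en)$. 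So closing your gap requires either an additional structural argument bounding $\Lit(\C|_{B_\rho(x)})$ by $\log\abs{B_\rho(x)}$ for the classes of interest, or a qualification of the statement; your instinct that naive halving over local labelings only yields $\abs{B_\rho(x)}$ is correct, and the missing step is not supplied in detail by the paper either.
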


\begin{proof}
The sample complexity bounds come from Lemmas~\ref{lemma:occam} and~\ref{lemma:rob-vc} and the fact that the Standard Optimal Algorithm (SOA) is a consistent learner, as it will be given counterexamples in the perturbation region until a robust loss of zero is achieved. 

For each query to $\LEQ$, a counterexample is returned, or the robust loss is zero. 
Then, using the mistake upper bound of SOA, which is $\Lit(\C)$, we get the query upper bound.
\end{proof}

Of course, some concept classes, e.g., thresholds, have infinite Littlestone dimension, so our bounds are not useful in these settings. 
In Section~\ref{sec:halfspaces}, we study distributional assumptions that give reasonable query upper bounds for linear classifiers, using the theorem below.
It exhibits a query upper bound for robustly learning with an online algorithm $\A$ with a given mistake upper bound $M$.
This is moreover particularly useful in case $M$ is polynomial in the input dimension and $\A$ is \emph{computationally} efficient (which is not the case for the Standard Optimal Algorithm in Theorem~\ref{thm:soa}).

\begin{lemma}
\label{lemma:rob-mistake-bound}
Fix $\rho>0$. 
Let $\C$ be a concept class and $\D=\set{\D_c}_{c\in\C}$ a distribution family, where $\D_c$ is a subfamily of distributions.
Suppose that $\C$ is learnable in the online setting with mistake  bound $M(n)$ whenever, for a given target $c$ the instance space is restricted to a given subset $\X_c\subseteq \X$.
Moreover suppose that for any $c\in\C$ and $D\in\D_c$, $B_\rho(\supp(D))\subseteq\X_c$.
Then $\set{(c,D)\given c\in\C,\; D\in\D_c}$ is $\rho$-robustly learnable using the $\EX$ and $\rho$-$\LEQ$ oracles with sample complexity $ m (n,\epsilon,\delta) = \frac{1}{\epsilon}\left(\RVClong+\log\frac{1}{\delta}\right)$ and query complexity $r (n,\epsilon,\delta) = m (n,\epsilon,\delta) \cdot M(n)$.
\end{lemma}

\begin{proof}
The sample complexity bound $m$ is obtained from Lemma~\ref{lemma:rob-vc} and, for each point in the sample, a query to $\LEQ$ can either return a robust loss of 0 or 1 and give a counterexample. 
Since the mistake bound is $M(n)$, and all counterexamples  come from the region $B_\rho(\supp(D))$, we have a query upper bound of $r = m \cdot M(n)$, as required.
\end{proof}

\subsection{Improved Query Complexity Bounds: Conjunctions}
\label{sec:conj}

In this section, we show  how to improve the query upper bound from the previous section in the special case of conjunctions. 
Moreover, the algorithm used to robustly learn conjunctions is both statistically and \emph{computationally} efficient, which is not the case for the Standard Optimal Algorithm.

\begin{theorem}
\label{thm:conj-df-leq}
The class $\Conj$ is efficiently $\rho$-robustly learnable in the distribution-free setting using the $\EX$ and $\rho$-$\LEQ$ oracles with at most $O\left( \frac{1}{\epsilon}\left(n+\log\frac{1}{\delta}\right)\right)$ random examples and $O\left( \frac{1}{\epsilon}\left(n+\log\frac{1}{\delta}\right)\right)$ queries to $\rho$-$\LEQ$.
\end{theorem}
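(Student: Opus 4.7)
My plan is to leverage Lemma~\ref{lemma:occam} to handle the sample complexity, noting that there are $3^n$ conjunctions (each variable is either positive, negated, or absent), so $\log|\mathcal{H}_n|=O(n)$. It thus suffices to exhibit a \emph{robustly consistent} and computationally efficient learner that, on a sample of size $m=O(\frac{1}{\epsilon}(n+\log(1/\delta)))$, makes $O(m)$ queries to $\rho$-$\LEQ$ and returns a conjunction with zero empirical robust loss.

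The algorithm I would use is the natural adaptation of the standard elimination procedure. Initialize $h$ to be the conjunction of all $2n$ literals $x_1\wedge \bar x_1\wedge\cdots\wedge x_n\wedge\bar x_n$ (the identically false conjunction). First pass over the random examples: for each positive example $x$ (with $c(x)=1$), delete from $h$ every literal violated by $x$. Second pass over the sample points $x_1,\dots,x_m$: for each $x_i$, repeatedly query $\rho$-$\LEQ$ with $h$ at $x_i$; whenever the oracle returns a counterexample $z\in B_\rho(x_i)$, delete from $h$ every literal violated by $z$ and re-query at $x_i$; stop once the oracle confirms agreement on $B_\rho(x_i)$ and move on.

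The analysis rests on a single invariant: $h$ (viewed as a set of literals) always contains $c$. Initially this holds trivially, and both update rules preserve it. For a positive random example $x$, every literal of $c$ is satisfied by $x$ and is therefore never removed. For an $\LEQ$ counterexample $z$, the invariant forces $h(z)\leq c(z)$ pointwise, so the only possible disagreement is $h(z)=0$, $c(z)=1$; since $z$ then satisfies all literals of $c$, every literal removed lies in $h\setminus c$. Hence each counterexample strictly shrinks $h\setminus c$ by at least one literal, bounding the total number of counterexamples across the entire execution by $2n$.

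The main obstacle---and what makes the query count tight---is showing that once the $\LEQ$ oracle has confirmed agreement on $B_\rho(x_j)$ for an earlier sample point $x_j$, subsequent literal deletions cannot reintroduce a disagreement there. This follows from the same monotonicity: deleting a literal can only turn $h(u)$ from $0$ to $1$, and the invariant ensures that whenever $h(u)$ becomes $1$ we already have $c(u)=1$, so values $h(u)=c(u)=0$ and $h(u)=c(u)=1$ in $B_\rho(x_j)$ are both preserved. We therefore never need to revisit previous sample points, so the total query count is $m$ confirming queries plus at most $2n$ counterexample queries, i.e., $m+2n=O(\frac{1}{\epsilon}(n+\log(1/\delta)))$. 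Every step (evaluating $h$, deleting violated literals) runs in time polynomial in $n$, so the resulting algorithm is both statistically and computationally efficient.
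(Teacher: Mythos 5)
Your proposal is correct and follows essentially the same route as the paper's proof: the standard positive-example elimination algorithm seeded with all $2n$ literals, the invariant $c\subseteq h$ forcing every $\LEQ$ counterexample to delete a literal from $h\setminus c$, and the monotonicity observation that shrinking $h$ towards $c$ cannot reintroduce disagreement at previously confirmed points, yielding the $m+2n$ query bound. Your write-up is, if anything, slightly more explicit than the paper's about why confirmed sample points never need revisiting.
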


\begin{proof}
Let $c$ be the target conjunction and let $D$ be an arbitrary distribution. 
We describe an algorithm $\A$ with polynomial sample and query complexity with access to a $\rho$-$\LEQ$ oracle.
By Lemma~\ref{lemma:occam}, if we can get guarantee that $\A$ returns a hypothesis with zero robust loss on a i.i.d. sample of size $m= O\left( \frac{1}{\epsilon}\left(n+\log\frac{1}{\delta}\right)\right)$ with a polynomial number of queries to the $\rho$-$\LEQ$ oracle, we are done.

The algorithm  is similar to the standard PAC learning algorithm, in that it only learns from positive examples.
Indeed, the original hypothesis $h$ is a conjunction of all of the $2n$ literals. 
After seeing a positive example $x$, $\A$ removes from $h$ the literals $\bar{x_i}$ for $i=1,\dots,n$, as they cannot be in $c$.
Note that, by construction, any hypothesis $h$ returned by $\A$ always satisfies $c \subseteq h$.\footnote{We overload $c,h$ to mean both the functions and the set of literals in the conjunction, as it will be unambiguous to distinguish them from context.}
Thus, any counter example returned by the $\LEQ$ oracle will have that $c(z)=1$ and $h(z)=0$. 
This allows us to remove at least one literal from the hypothesis set for every counter example.
Now, it is easy to see that, for $c\subseteq h' \subseteq h$, if the robust loss $\mathbf{1}[\exists z \in B_\lambda(x) \st c(z) \neq h(z)]$  on $x$ w.r.t. $h$ is zero, so will be the robust loss on $x$ w.r.t. the updated hypothesis $h'$. 
Hence, $\A$ makes at most $m+2n$ queries to the $\LEQ$ oracle.
\end{proof}

Note that the query upper bound that we get is of the form $m+M$, as opposed to $m\cdot M$ from Lemma~\ref{lemma:rob-vc} (where $m$ is the sample complexity and $M$ the mistake bound).
This is because we have adapted the PAC learning algorithm for conjunctions to our setting. 
Any update to its hypothesis will not affect the consistency of previously queried points with robust loss of zero, and thus once zero robust loss is achieved on a point, it does not need to be queried again.  

\subsection{Linear Classifiers}
\label{sec:halfspaces}

We now derive sample and query complexity upper bounds for restricted subclasses of linear classifiers. 
We start with linear classifiers on $\boolhc$ with bounded weight, and then study linear classifiers on $\R^n$.
Note that the robustness threshold of linear classifiers on $\boolhc$ \emph{without} access to the $\LEQ$ oracle remains an open problem~\citep{gourdeau2022sample}.\footnote{With respect to the exact-in-the-ball definition of robustness.}

Let $\Halfspaces_{\boolhc}^W$  be the class of linear threshold functions on $\boolhc$ with integer weights  such that the sum of the absolute values of the weights and the bias is bounded above by $W$. 
We have the following theorem, which gives both sample and query complexity upper bounds for the robust learnability of $\Halfspaces_{\boolhc}^W$. 

\begin{theorem}
\label{thm:ltf-bool-df}
The class $\Halfspaces_{\boolhc}^W$ is $\rho$-robustly learnable with access to the $\EX$ and $\rho$-$\LEQ$ oracles by using the Winnow algorithm with sample complexity $m(n,\epsilon,\delta)=O\left(\frac{1}{\epsilon}\left(n+\min\set{n,W}\log (W+n)+\log\frac{1}{\delta}\right)\right)$ and query complexity $O(m(n,\epsilon,\delta) \cdot W^2\log(n))$.
\end{theorem}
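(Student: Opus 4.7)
The plan is to obtain the sample complexity bound by plugging $\Halfspaces_{\boolhc}^W$ into Lemma~\ref{lemma:occam}, and the query complexity bound by instantiating Lemma~\ref{lemma:rob-mistake-bound} with the classical Winnow mistake bound. The two ingredients I need are (i) an upper bound on $\log|\Halfspaces_{\boolhc}^W|$ of the form $O(n+\min\{n,W\}\log(n+W))$, and (ii) a mistake bound of $O(W^2\log n)$ for Winnow on this class. Together with the generic template provided by Lemma~\ref{lemma:rob-mistake-bound}, the algorithm is then natural: draw an i.i.d.\ sample $S$ of the required size $m$, and for each point $x\in S$, repeatedly query $\rho$-$\LEQ$ with the current Winnow hypothesis $h$, feeding any counterexample $z\in B_\rho(x)$ together with the label revealed by the oracle back into Winnow's update rule, until $h$ agrees with the target on the entire ball. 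Each time Winnow's hypothesis is updated, previously certified points may need to be re-queried, which is exactly why the query complexity scales as $m\cdot M$.

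The first nontrivial step is to count halfspaces. Every $f\in\Halfspaces_{\boolhc}^W$ is determined by an integer tuple $(w_1,\dots,w_n,b)\in\Z^{n+1}$ satisfying $\sum_i|w_i|+|b|\leq W$. A crude bound of $(2W{+}1)^{n+1}$ gives $\log|\Halfspaces_{\boolhc}^W|=O(n\log(n+W))$, which already suffices in the regime $W\geq n$. When $W<n$, since each nonzero coordinate contributes at least $1$ to the $\ell_1$-norm, at most $k\leq \min(n+1,W)$ coordinates are nonzero; choosing their positions, signs, and magnitudes bounds the count by $\sum_{k=0}^{W}\binom{n+1}{k}(2W)^k$, whose logarithm is $O(W\log(n+W))$. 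Taking the minimum of the two regimes and absorbing low-order terms into the additive $n$ yields the stated bound $\log|\Halfspaces_{\boolhc}^W|=O(n+\min\{n,W\}\log(n+W))$. Feeding this into Lemma~\ref{lemma:occam} produces the claimed $m(n,\epsilon,\delta)$.

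For the query bound, I will invoke Littlestone's analysis of Winnow (in its balanced/negative-weight variant) to argue that any target $\sgn(\mathbf{w}\cdot\mathbf{x}-b)$ with $\|\mathbf{w}\|_1+|b|\leq W$ and $\mathbf{x}\in\{0,1\}^n$ has margin at least $1$ against the integer threshold, and hence Winnow makes at most $M(n)=O(W^2\log n)$ mistakes on any sequence of labelled examples consistent with such a target. Crucially, in our adversarial-ERM setting the counterexamples $z$ returned by $\rho$-$\LEQ$ are still labelled by the target $c$ (since $c(z)\neq h(z)$ is the exact-in-the-ball guarantee), so the mistake-bound analysis applies verbatim to the sequence of Winnow updates. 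Applying Lemma~\ref{lemma:rob-mistake-bound} with this $M$ then yields the query complexity $O(m\cdot W^2\log n)$.

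The main obstacle I anticipate is the counting argument: one must get a bound that behaves correctly in both regimes $W\leq n$ and $W>n$, with the $\min\{n,W\}\log(n+W)$ dependence rather than the looser $n\log(n+W)$. The additive $n$ in the final bound is largely a slack term that absorbs lower-order contributions (e.g.\ $\log\binom{n+1}{k}$ summands and the $\log W$ from summing over $k$), and checking that the sum of binomials is dominated by its largest term is the one routine calculation that needs to be done carefully. Everything else is essentially black-box: Occam's razor, Winnow's known mistake bound, and the query-reduction template of Lemma~\ref{lemma:rob-mistake-bound}.
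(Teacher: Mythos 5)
Your proposal is correct and follows essentially the same route as the paper: Occam's razor (Lemma~\ref{lemma:occam}) applied to a count of $\Halfspaces_{\boolhc}^W$, plus the Winnow mistake bound $O(W^2\log n)$ fed into the generic $m\cdot M$ query template of Lemma~\ref{lemma:rob-mistake-bound}, with the sign issue handled by Littlestone's reduction. The only cosmetic difference is in the counting step, where the paper uses a stars-and-bars argument giving $O(2^n(n+W)^{\min\{n,W\}})$ while you enumerate supports and magnitudes directly; both yield the same logarithm.
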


\begin{proof}
The sample complexity bound uses Lemma~\ref{lemma:occam}.
Note the class $\Halfspaces_{\boolhc}^W$  has size $O(2^n (n+W)^{\min\set{n,W}})$.
This  is a simple application of the stars and bars identity, where $W$ is the number of stars and $n+1$ the number of bars (as we are considering the bias term as well): ${n+W\choose W}=O( (n+W)^{min\set{n,W}})$. 
The $2^n$ term comes from the fact that each weight can be positive or negative.
The query complexity uses the fact that the mistake bound for Winnow for $\Halfspaces_{\boolhc}^W$ is $O(W^2\log(n))$ in the case of positive weights (the full statement can be found in Appendix.~\ref{app:useful}).
\citet{littlestone1988learning} outlines how to use the Winnow algorithm when the linear classifier's weights can vary in sign, at the cost of doubling the input dimension and weight bound (see Theorem 10 and Example 6 therein).
\end{proof}

We now turn our attention to linear classifiers $\ltfreal$ on $\R^n$. 
We first show that, when considering an adversary with bounded $\ell_2$-norm perturbations, we can bound the sample complexity of robust learning for this class through a bound on the VC dimension of the robust loss. 
However, the query complexity is infinite in the general case.
This is because the Littlestone dimension of thresholds, and thus halfspaces, is infinite. 
We will address this issue in Section~\ref{sec:adv-bounded-precision}.

\begin{theorem}
\label{thm:sc-ub-ltf-real}
Let the adversary's budget be measured by the $\ell_2$ norm.
Then any $\rho$-robust ERM learning algorithm for $\ltfreal$ on $\R^n$ has sample complexity $m=O(\frac{1}{\epsilon}( n^3 + \log (1/\delta)))$.
\end{theorem}

The proof of this theorem relies on deriving an upper bound on the VC dimension of the robust loss of halfspaces.
This will help us bound the sample complexity needed to guarantee robust accuracy. 
To bound the VC dimension of the robust loss of linear classifiers, we will need the following theorem from \cite{goldberg1995bounding}:

\begin{theorem}[Theorem 2.2 in \cite{goldberg1995bounding}]
\label{thm:goldberg}
Let $\set{\C_{k,n}}_{k,n\in\N}$ be a family of concept classes where concepts in $\C_{k,n}$ and instances are represented by $k$ and $n$ real values, respectively.
Suppose that the membership test for any instance $\alpha$ in any concept $C$ of $\C_{k,n}$ can be expressed as a boolean formula $\Phi_{k,n}$ containing $s = s(k,n)$ distinct atomic predicates, each predicate being a polynomial inequality or equality over $k+n$ variables (representing $C$ and $\alpha$) of degree at most $d=d(k,n)$.
Then $\VC({\C_{k,n}}) \leq 2k\log (8eds)$. 
\end{theorem}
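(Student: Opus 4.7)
The plan is to bound the number of dichotomies that the class $\C_{k,n}$ can induce on any fixed set of $m$ instances, and compare this to the $2^m$ dichotomies needed for shattering. Fix $\alpha_1,\ldots,\alpha_m \in \R^n$. For each of the $s$ atomic polynomial predicates $p_j$ appearing in $\Phi_{k,n}$ and each instance $\alpha_i$, define the polynomial $q_{i,j}(C) := p_j(C,\alpha_i)$ in the $k$ concept-parameter variables $C \in \R^k$; specialising the $\alpha_i$-coordinates cannot raise the total degree, so each $q_{i,j}$ has degree at most $d$. Since $\Phi_{k,n}$ is a Boolean combination of the $p_j$'s, the labelling that $C$ assigns to $(\alpha_1,\ldots,\alpha_m)$ is a deterministic function of the sign vector $(\sgn(q_{i,j}(C)))_{i\in[m],j\in[s]} \in \set{-1,0,+1}^{sm}$. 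Hence the number of dichotomies that $\C_{k,n}$ induces on $\alpha_1,\ldots,\alpha_m$ is at most the number of realisable sign patterns of the $sm$ polynomials $\set{q_{i,j}}$ in $k$ real variables.

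Next, I would invoke a Milnor--Thom/Warren-style bound on sign patterns of real polynomials: $N$ polynomials of total degree at most $d$ in $k$ real variables realise at most $(8eNd/k)^k$ sign patterns whenever $N \geq k$. Applying this with $N = sm$ gives an upper bound of $(8esmd/k)^k$ on the number of dichotomies induced on any fixed $m$-point set.

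For $m$ points to be shattered we need $2^m \leq (8esmd/k)^k$, i.e.\ $m \leq k \log_2(8esmd/k)$. A standard self-referential inequality -- if $m \leq a\log_2(bm)$ with $a,b \geq 1$ then $m \leq 2a\log_2(ab)$ -- converts this implicit bound into the closed form $m \leq 2k\log_2(8eds)$, giving the claimed bound on $\VC(\C_{k,n})$.

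The main obstacle is to invoke a version of Warren's theorem whose constants align with the $8eds$ in the target statement, and to accommodate atomic predicates that may be equalities as well as strict or non-strict inequalities (so the sign alphabet is $\set{-1,0,+1}$ rather than $\set{-1,+1}$). Both issues are standard: Milnor--Thom counts for semi-algebraic sets still yield $(O(Nd/k))^k$ realisable sign patterns in $\set{-1,0,+1}^N$, so the same calculation goes through with the same leading constants, and the remaining steps are elementary bookkeeping.
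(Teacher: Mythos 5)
This statement is not proved in the paper at all: it is quoted verbatim as Theorem~2.2 of \cite{goldberg1995bounding}, so there is no internal proof to compare against. Your sketch is essentially the standard Goldberg--Jerrum argument for that result and is sound: specialising the instances gives $sm$ polynomials of degree at most $d$ in the $k$ parameters, dichotomies are functions of their sign vectors, and a Warren/Milnor--Thom bound plus the self-referential inequality yields $m \leq 2k\log_2(8eds)$. The only point worth making explicit is the provenance of the constant: Warren's bound $(4edN/k)^k$ counts strict sign patterns, and the factor $8$ arises from the standard doubling trick (replacing each polynomial by two perturbed copies) used to account for zero signs, which is exactly how the cited source obtains $8eds$; your parameters $a=k$, $b=8eds/k$ then make the final bookkeeping come out as claimed (and the regime $ab = 8eds \geq 4$ is where that self-referential lemma is valid).
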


We will now translate the $\rho$-expansion of the error region (i.e., the robust loss function) between two halfspaces as a boolean formula using a result from \cite{renegar1992computational}.
This will allow us to use the theorem above from \cite{goldberg1995bounding} to bound the VC dimension of the robust loss of $\ltfreal$.

\begin{lemma}
\label{lemma:rob-risk-bool-formula}
Let $a,b\in\R^{n}, a_0,b_0\in\R$, and define the map $\varphi: x \mapsto \mathbf{1}[\exists z\in B_\rho(x) \st \sgn(a^\top z + a_0)\neq \sgn(b^\top z + b_0)]$.
Then $\varphi$ can be represented as a boolean formula $\Phi$ with $s=10^{Cn^2}$ distinct atomic predicates, each predicate being a polynomial inequality over $2n+2$ variables of degree at most $10^{C'n}$ for some constants $C,C'>0$.
\end{lemma}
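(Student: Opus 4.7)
The plan is to express $\varphi$ as a prenex first-order formula over the reals with a single existential quantifier block, apply the quantifier elimination procedure of Theorem~\ref{thm:renegar} to convert it into a boolean combination of polynomial inequalities, and then read off the parameters asserted in the lemma from Renegar's quantitative bounds.

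First, I would translate the two semantic conditions hidden inside $\varphi$ into polynomial predicates in the variables $(a,a_0,b,b_0,x,z)$. The membership condition $z \in B_\rho(x)$ is the quadratic inequality $\|z - x\|_2^2 \leq \rho^2$, and the disagreement condition $\sgn(a^\top z + a_0) \neq \sgn(b^\top z + b_0)$ is captured (modulo a constant-size disjunction that handles the boundary case where one of the affine expressions vanishes while the other does not) by the quadratic inequality $(a^\top z + a_0)(b^\top z + b_0) < 0$. This rewrites $\varphi$ as
\[
\varphi(x) = 1 \iff (\exists z \in \R^n)\bigl[\, \|z-x\|_2^2 \leq \rho^2 \;\wedge\; \sigma(a,a_0,b,b_0,z)\,\bigr],
\]
where the matrix $\sigma$ is a boolean combination of $m = O(1)$ polynomial atoms of degree at most $d = 2$.

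Next, I would apply Theorem~\ref{thm:renegar} with one quantifier block ($\omega = 1$) containing $n_1 = n$ bound variables and with free variables $y = (a,a_0,b,b_0,x)$ of total length $l = O(n)$. The theorem delivers a quantifier-free disjunctive normal form $\Phi \equiv \bigvee_{i=1}^I \bigwedge_{j=1}^{J_i} (h_{ij}(y)\,\Delta_{ij}\,0)$ with $I \leq (md)^{2^{O(1)}\, l\, n} = 2^{O(n^2)}$, $\max_i J_i \leq (md)^{2^{O(1)} n} = 2^{O(n)}$, and $\deg(h_{ij}) \leq 2^{O(n)}$. Hence the total number of distinct atomic predicates is bounded by $I \cdot \max_i J_i \leq 10^{C n^2}$, and each is a polynomial inequality of degree at most $10^{C' n}$ in the free variables, as claimed.

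The main obstacle I anticipate is not deep mathematics but careful bookkeeping: Renegar's bounds depend doubly exponentially on the quantifier alternation depth and multiplicatively on $l$ and $\prod_k n_k$, so the encoding in the first step has to keep $\omega = 1$, $m, d = O(1)$, and only $n_1 = n$ and $l = O(n)$ scaling with the dimension, so that the final dependence lands at $n^2$ in the exponent of $10$ for $s$ and $n$ in the exponent of $10$ for the degree. A secondary care is needed in the first step to ensure that the sign-boundary disjunction does not inflate $m$ beyond $O(1)$; otherwise the constants $C, C'$ would degrade with $n$.
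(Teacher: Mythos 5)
Your proposal matches the paper's proof: both encode the disagreement-in-the-ball condition as a single-existential-block formula over the reals with $O(1)$ atomic polynomial inequalities of constant degree, then invoke Renegar's quantifier elimination (Theorem~\ref{thm:renegar}) with $\omega=1$, $\prod_k n_k=n$ and $l=O(n)$ to obtain $I\max_i J_i\leq 10^{Cn^2}$ atoms of degree at most $10^{C'n}$. The only cosmetic difference is that the paper writes the sign disagreement directly as a disjunction of four affine inequalities rather than your product-plus-boundary-case encoding; either choice keeps $m$ and $d$ constant and yields the same bounds.
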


\begin{proof}
First note that the predicate $\sgn(a^\top z+a_0)\neq\sgn(b^\top z+b_0)$ 
can be represented as the following formula:
\begin{equation*}
\left(a^\top z+a_0 \geq 0 \wedge b^\top z+b_0 < 0\right) \vee \left( a^\top z+a_0 <0 \wedge b^\top z+b_0\geq 0\right)\enspace,
\end{equation*}
which contains $n+(2n+2)$ variables and 4 predicates.
Moreover, given a perturbation $\zeta\in\R^n$, the constraint $\norm{\zeta}_2\leq \rho$ on its magnitude is a polynomial inequality of degree 2:
$$\sum_i \zeta_i^2 \leq \rho^2 \enspace.$$
Now, consider the following formula:
\begin{equation*}
\Psi(x) = \exists \zeta \in\R^n \st \left( \sgn(a^\top (x+\zeta)+a_0)\neq\sgn(b^\top (x+\zeta)+b_0) \wedge \norm{\zeta}_2\leq \rho\right)
\enspace.
\end{equation*}
This is a formula of first-order logic over the reals.
Using the notation of Theorem~\ref{thm:renegar}, we have $\omega=1$ quantifier, and thus $\prod_k n_k = n$, one Boolean formula with $m=5$ polynomial inequalities of degree $d$ at most $2$, and $l=n$.
Thus, $\Psi(x)$ can be expressed as a quantifier-free formula $\Phi(x)=\bigvee_{i=1}^I \bigwedge_{j=1}^{J_i} (h_{ij}(y)\Delta_{ij} 0)$ of size $$I\max_i J_i \leq (md)^{2^{O(\omega)}l\prod_k n_k+2^{O(\omega)}\prod_k n_k}\leq 10^{Cn^2} $$ for some constant $C$, where the polynomial inequalities are of degree at most $(md)^{2^{O(\omega)}\prod_k n_k}\leq 10^{C'n}$ for some constant $C'$.
\end{proof}

We thus get the following corollary.

\begin{corollary}
\label{cor:rvc-ltf}
The VC dimension of the robust loss of $\ltfreal$ 
is $O( n^3)$.
\end{corollary}
\begin{proof}
We let $s=10^{Cn^2}$, $k=2n+2$ and $d=10^{C'n}$ from the proof above and use Definition~\ref{def:rob-vc} and Theorem~\ref{thm:goldberg}  to get a VC dimension of the robust loss upper bound of $O(k\log(sd))=O(n^3)$.\footnote{Note that Corollary~2.4 in \cite{goldberg1995bounding} uses this reasoning.}
\end{proof}

\subsection{Robust Learning against Precision-Bounded Adversaries}
\label{sec:adv-bounded-precision}

It is possible to obtain some relatively straightforward robustness guarantees for classes with infinite Littlestone dimension if there exists a sufficiently large margin between classes (in which case the exact-in-the-ball and  constant-in-the-ball notions of robustness coincide). 
However, some of these results have already been derived in the literature.
See, e.g., \citep{cullina2018pac} for the sample complexity of halfspaces in the constant-in-the-ball realizable setting w.r.t. $\ell_p$-norm adversaries, which improves on the sample complexity bound of Theorem~\ref{thm:sc-ub-ltf-real} by being linear -- vs cubic -- in the input dimension; together with a mistake bound for Perceptron, we get $\LEQ$ bounds.\footnote{In this case, we would need  a margin between the sets $B_\rho(\supp(D_0))$ and $B_\rho(\supp(D_1))$, as these are the sets of potential counterexamples -- the condition $B_\rho(\supp(D_0))\cap B_\rho(\supp(D_1))=\emptyset$ is not sufficient in itself to get guarantees for hypotheses with infinite Littlestone dimension. See \citep{montasser2021adversarially} for both upper and lower bounds in this setting.}
We also note that a previous version of this work \citep{gourdeau2022when} contained an erroneous statement on the robust learnability of halfspaces with margin in $\R^n$. 
An erratum is included in Appendix~\ref{app:erratum}.

Instead, in this section, we look at robust learning  problems in which the decision boundary can cross the perturbation region, but where the adversary's precision is limited.
We use ideas from \cite{ben2009agnostic} concerning hypotheses with margins in the online learning framework. 
Note, however, that here the margin does not represent sufficient distance between classes or a hypothesis' confidence, but rather a region of the instance space that is too costly for the adversary to access (e.g., the number of bits needed to express an adversarial example is too large). 
A thorough comparison with online learning and the work of \cite{ben2009agnostic} appears at the end of this section.

Examining the proof that the Littlestone dimension of thresholds is infinite (see Appendix~\ref{app:online}), the key assumption is that the adversary has \emph{infinite} precision. 
This is perhaps not a reasonable assumption to make in practice. 
More precisely, in the construction of the Littlestone tree, each counterexample given requires an additional bit to be described, as the remainder of the interval $[0,1]$ is split in two at each prediction. 
Our work in this section formally and more generally addresses this potential issue.

We now define the meaning of bounding an adversary's precision in the context of robust learning, which is depicted in Figure~\ref{fig:bounded-precision}.

\begin{figure}
\begin{center}
\includegraphics[scale=0.3]{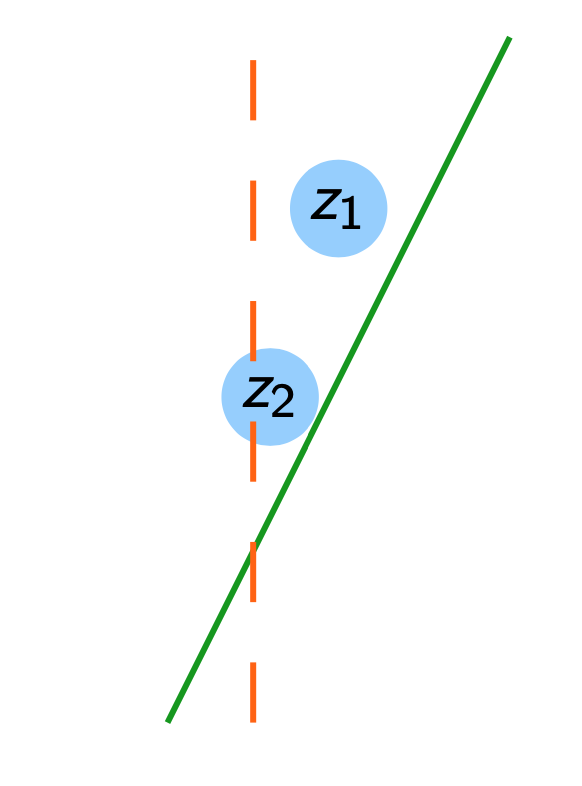}
\end{center}
\caption{The dotted line is the hypothesis $h$, and the solid line, the target $c$. The adversary has precision $\tau$. The shaded regions represent the set $B_\tau(z_i)$. The counterexample $z_1$ is valid as $c$ and $h$ disagree on all of $B_\tau(z_1)$ and both functions are constant in this region, but $z_2$ is not as $c$ and $h$ agree on part of $B_\tau(z_2)$.}
\label{fig:bounded-precision}
\end{figure}

\begin{definition}[Precision-Bounded Adversary]
Let $(\X,d)$ be a metric space, and let an adversary $\adv$ have budget $\rho$. 
We say that $\adv$ is \emph{precision-bounded} by $\tau$, if for target $c$, hypothesis $h$, and input $x$, $\adv$ can only return counterexamples $z\in B_\rho(x)$ such that $h$ and  $c$ are both constant and disagree on the whole region $B_{\tau}(z)$ and $B_{\tau}(z)\subseteq B_\rho(x)$.
\end{definition}

\begin{definition}[Littlestone Trees of Precision $\tau$]
\label{def:lit-tree-tau}
A Littlestone tree of precision $\tau$ for a hypothesis class $\mathcal{H}$ on metric space $(\X,d)$ is a complete binary tree $T$ of depth $d$ whose internal nodes are instances $x\in\X$.
Each edge is labelled with $-$ or $+$ and corresponds to the potential labels of the parent node $x$ and the region $B_\tau(x)$.
Each path from the root to a leaf must be consistent with some $h\in\mathcal{H}$, i.e. if $x_1,\dots,x_d$ with labellings $y_1,\dots,y_d$ is a path in $T$, there must exist $h\in\mathcal{H}$ such that $h\vert_{B_\tau(x_i)}=y_i$ for all $i$. 
\end{definition}

While it is possible to have a hypothesis giving different labels to points in the region $B_\tau(x)$ in the standard setting, in the above construction, one must commit to labelling the whole region $B_\tau(x)$ either positively or negatively. 

For the remainder of the text, we will identify each leaf in a Littlestone tree $T$ with a hypothesis $h\in\H$ that is consistent with the labellings along the path from the root to this leaf. 
Note that the choice of labelling $y$ of $B_\tau(x)$ of some $x\in T$ implies that, in contrast to the standard Littlestone trees, any $h\in\H$ with $h(x)=y$ that is \emph{not} constant on  $B_\tau(x)$ cannot be consistent with any path in $T$. 
The set of consistent hypotheses on $T$ thus does \emph{not} form a partition of $\H$ in our precision-bounded setting.

We can now define the following variant of the Littlestone dimension, which is analogous to the margin-based Littlestone dimension of \cite{ben2009agnostic}.

\begin{definition}[Precision-Bounded Littlestone Dimension]
The Littlestone dimension of precision $\tau$ of a hypothesis class $\mathcal{H}$ on metric space $(\X,d)$, denoted $\Lit_\tau(\mathcal{H})$, is the depth $k$ of the largest Littlestone tree with bounded precision $\tau$ for $\mathcal{H}$. If no such $k$ exists then $\Lit(\mathcal{H})=\infty$.
\end{definition}

Note that setting $\tau=0$, i.e., there are no constraints on the nodes, we recover the Littlestone tree and Littlestone dimension definitions.
As an example, let us consider the class of threshold functions, which, when $\tau=0$, have infinite Littlestone dimension.

\begin{proposition}
Let $\tau>0$. 
The class $\thresholds_B$ of threshold functions on $[0,B]$ induce Littlestone trees of precision $\tau$ of depth bounded by $log\frac{B}{\tau}-1$. Thus $\Lit_\tau(\thresholds_B)=\lfloor\log\frac{B}{\tau}-1\rfloor$.
\end{proposition}

\begin{proof}
Let $\tau>0$ be arbitrary.
Here, the optimal strategy to construct a Littlestone tree of maximal depth is to divide the interval $[0,B]$ in two equal parts at each round.\footnote{To see that this is optimal, consider the case where the interval is not split into two equal parts.
Since we are committing to the labelling of the whole region $B_\tau(x)$ at a given node $x$, the smaller interval will result in a Littlestone subtree that has smaller depth than if the subintervals were of equal length. This results in a Littlestone tree of smaller depth as Littlestone trees must be complete.}
Given $x\in[0,B]$ and $\alpha<\alpha'\in\R$, in order to have two threshold functions $h_\alpha(x)=\mathbf{1}[x\geq \alpha]$ and $h_{\alpha'}(x)=\mathbf{1}[x\geq \alpha']$ that disagree on the whole range $[x-\tau,x+\tau]$, we need both $\alpha<x-\tau$ and $\alpha'\geq x+\tau$.
Thus, at depth $d$, we have divided $[0,B]$ into $2^d$ parts we must have $2\tau\geq B2^{-d}$, implying $\Lit_\tau(\thresholds_B)=\lfloor\log\frac{B}{\tau}-1\rfloor$.
\end{proof}
 
We remark that, exactly following the proof of online learning ($\tau=0$), $\Lit_\tau(\H)$ is a lower bound on the number of mistakes of any learner against an adversary of precision $\tau$.

\begin{theorem}
Any online learning algorithm for $\C$ has mistake bound $M\geq \Lit_\tau(\C)$ against a $\tau$-precision-bounded adversary.
\end{theorem}
\begin{proof}
Let $\A$ be any online learning algorithm for $\C$.
Let $T$ be a Littlestone tree of precision $\tau$ and depth $\Lit_\tau(\C)$ for $\C$. 
An adversary can force $\A$ to make $\Lit_\tau(\C)$ mistakes by sequentially and adaptively choosing a path in $T$ in response to $\A$'s predictions.
\end{proof}
 
Now, let us consider a version of the SOA where the adversary has precision $\tau$. 
The algorithm is identical to the SOA, except for the definition of $V^{(b)}_t$, which requires that the hypotheses are constant in the region around the prediction.

\begin{algorithm}
\caption{Precision-Bounded Standard Optimal Algorithm}
\begin{algorithmic}
\Require A hypothesis class $\mathcal{\H}$
\State $V_1\gets\H$
\For {$t=1,2,\dots$}
\State Receive example $x_t$
\State $V^{(b)}_t \gets \set{h\in V_t\given h\vert_{B_\tau(x_t)}=b}$
\State $\hat{y_t}=\underset{b}{\arg\max} \;\Lit_\tau\left( V^{(b)}_t \right)$ 
\State Receive true label $y_t$
\State $V_{t+1} \gets V^{(y_t)}_t$
\EndFor
\end{algorithmic}
\label{alg:soa-precision}
\end{algorithm}

Below, we show that this slight modification of the SOA is also optimal for cases in which the adversary is constrained by $\tau$.
This is analogous to Theorem~21 in \citep{ben2009agnostic}, who did not include their proof of optimality for brevity. 
It is included here for completeness.

\begin{theorem}
\label{thm:soa-tau-opt}
The precision-bounded Standard Optimal Algorithm makes at most $\Lit_\tau(\C)$ mistakes in the mistake-bound model of online learning when the adversary has precision $\tau$.
\end{theorem}

The proof of Theorem~\ref{thm:soa-tau-opt} will use the following result, showing that no node in the tree has a $\tau$-expansion that overlaps with the $\tau$-expansion of any of its ancestors.

\begin{proposition}
\label{prop:ancestor-intersection-empty}
Let $T$ be a Littlestone tree of precision $\tau$.
Then for any node $x\in T$ and ancestor $x'\in T$ of $x$, $B_\tau(x)\cap B_\tau(x')=\emptyset$.
\end{proposition}
\begin{proof}
Take two paths from the root to two distinct leaves, $h_0$ and $h_1$, respectively.
Let the paths branch off at $x\in T$, with $h_y$ giving label $y$ to the whole region $B_\tau(x)$.
Let $x'$ be an ancestor of $x$ in $T$, and note that $h_0=h_1=b$ on $B_\tau(x')$ for some $b\in\{0,1\}$.
Then, since $h_0$ and $h_1$ must disagree on all of $B_\tau(x)$, it follows that $B_\tau(x)\cap B_\tau(x')=\emptyset$.
\end{proof}

We are now ready to prove Theorem~\ref{thm:soa-tau-opt}.

\begin{proof}[Proof of Theorem~\ref{thm:soa-tau-opt}]
We will show that, at every mistake, the precision-bounded Littlestone dimension of the subclass $V_t$ decreases by at least 1 after receiving the true label $y_t$.

WLOG, assume that there are not $t'<t$ such that $x_{t'}\in B_\tau(x_t)$, as otherwise this implies that $V_t^{(y_{t'})}=V_t$ and $V_t^{(\neg y_{t'})}=\emptyset$, and we cannot make a mistake (note in particular that we cannot have two differently labelled points in $B_\tau(x_t)$ as otherwise this would not be a valid example for the adversary to give).

Suppose that, at time $t$, $y_t= \arg\min_b \Lit_\tau(V^{(b)}_t)$. 
Note that $V_{t+1} = V^{(y_t)}_t$.
Now, consider any two Littlestone trees  $T_{y_t}$ and  $T_{\hat{y}_t}$ of precision $\tau$ and maximal depths for $V^{(y_t)}_t$ and $V^{(\hat{y})}_t$, respectively. 
By Proposition~\ref{prop:ancestor-intersection-empty} and definition of $V^{(b)}_t$, neither tree can contain nodes whose $\tau$-expansions intersect with $B_\tau(x_t)$. 
Moreover, all hypotheses in $V^{(y_t)}_t$ and $V^{(\hat{y})}_t$ are constant on $B_\tau(x_t)$.
Hence it is possible to construct a $\tau$-constrained Littlestone tree $T$ for $V_t$ of depth $\min_b \Lit_\tau(V^{(b)}_t)+1$ (recall that $T$ must be complete).
Then $\Lit_\tau(V_t)\geq\Lit_\tau(V^{({y_t})}_t)+1=\Lit_\tau(V_{t+1})+1 $, as required.\footnote{Note that the Littlestone dimension does not necessarily decrease when $y_t=\hat{y}_t$, as we could have $V_t=V_t^{(y_t)}$.}
\end{proof}

\begin{remark}
When considering threshold functions on $[0,1]$, and given example $x_t$ to predict, the SOA's strategy is effectively to look at the labelled points in the history $x_1,\dots,x_{t-1}$ and consider the largest $x^{(0)}\in[0,1]$ with negative label and the smallest $x^{(1)}\in[0,1]$ with positive label, and predicting $y_t=\underset{b}{\arg\min} \abs{x_t - x^{(b)}}$.
\end{remark}

We now turn our attention to the robust learning of halfspaces in $(\R^n,d)$ against adversaries of precision $\tau$, where $d$ is the metric induced by the $\ell_2$ norm.
 As pointed out by \cite{ben2009agnostic}, we essentially have the same argument as the Perceptron algorithm, because, once the hypothesis is sufficiently close to the target, the adversary cannot return counterexamples near the boundary.
Note that this result can be generalized to $\ell_p$ norms.
 Figure~\ref{fig:ltf-bounded-precision} depicts the argument of the proof of Theorem~\ref{thm:ltf-real-precision}.
 
\begin{theorem}
\label{thm:ltf-real-precision}
Fix constants $B,\tau>0$.
Let the adversary's budget $\rho$ be measured by the $\ell_2$ norm.
Let $\ltfreal$ be the class of halfspaces on $\R^n$ where the instance space is restricted to points $x\in\R^n$ with $\norm{x}_2\leq B-\rho$. 
Then, $\ltfreal$  is distribution-free $\rho$-robustly learnable against an adversary of precision $\tau$ using the $\EX$ and $\rho$-$\LEQ$ oracles with sample complexity $m(n,\epsilon,\delta)=O(\frac{1}{\epsilon}( n^3 + \log (1/\delta)))$ and query complexity $r(n,\epsilon,\delta)=m(n,\epsilon,\delta)\cdot\frac{B^2}{\tau^2}$.
Note that this is query-efficient if $\frac{B^2}{\tau^2}=\poly(n)$.
\end{theorem}

\begin{proof}
The sample complexity bound follows from Theorem~\ref{thm:sc-ub-ltf-real}.
The query upper bound  follows from Lemma~\ref{lemma:rob-mistake-bound}  and the mistake bound for the Perceptron algorithm (see Theorem~\ref{thm:mistake-bound-perceptron}).
To see that the bound for Perceptron can be used, note that the adversary having precision $\tau$ implies that any consistent target function $c(x)=a^\top x +a_0$ and any counterexample $z$ will satisfy the conditions 
 (i) $\norm{z}_2\leq B$ and (ii) $\tau \leq \frac{c(z)(a^\top z)}{\norm{z}_2}$ from Theorem~\ref{thm:mistake-bound-perceptron}.
\end{proof}

Note that the dependence on $\tau$ in the mistake bound, and thus the $\LEQ$ upper bound, is $1/\tau^2$, in contrast to the dependence of $\log 1/\tau$ for thresholds.

\begin{remark}
The precision bounds on an adversary allow us to achieve robust learning guarantees where they were previously impossible to obtain.
However, since we expect the target and hypothesis to be constant in balls of radius $\tau$ to get counterexamples, the set-up has some limitations.
Indeed, if this assumption does not hold, robust learning can become trivial.
For instance, for parities on $\boolhc$ (which are highly unstable as a single bit flip can cause a label change) and for any $\tau\geq1$, we have that any $\tau$-bounded-precision adversary will not be able to return any counterexamples, implying that we are operating the standard PAC setting. 
\end{remark}

\begin{figure}
\begin{center}
\includegraphics[scale=0.24]{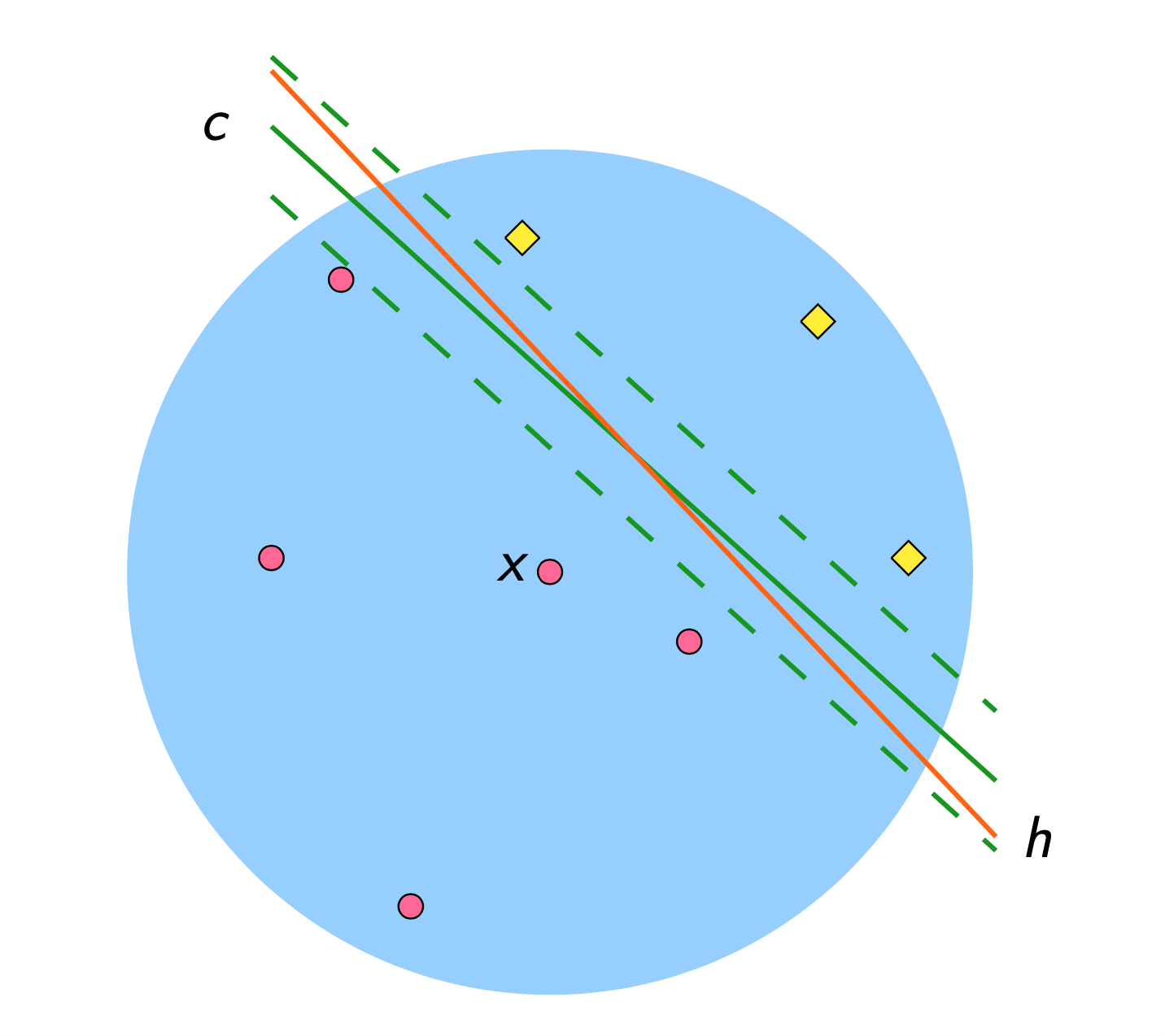}
\end{center}
\caption{A visual representation of the proof of Theorem~\ref{thm:ltf-real-precision}. The dotted lines on either side of the target $c$ represent a margin of $\tau/2$. Any hypothesis within the dotted lines in the (shaded) perturbation region ensures that an adversary of bounded precision $\tau$ cannot return any counterexamples. Finally, counterexamples must be labelled according to the target $c$, and both $h$ and $c$ are not constant on $B_\rho(x)$.}
\label{fig:ltf-bounded-precision}
\end{figure}

\paragraph*{Comparison with online learning.}
Note that if we set $\rho$ to be large enough so that the perturbation region is the whole instance space $\X$ for any point $x$, we (almost) recover the adversary model in the online learning setting.
The only distinction is that, in online learning, the learner is given a point $x_t$ at time $t$ to classify (implicitly classifying the whole region $B_\tau(x_t)$), rather than committing to a hypothesis $h$ on the whole instance space. 
The adversary (or ``nature'', if a target must be chosen a priori) reveals the true label after a prediction is made. 

In the mistake-bound model, the only constraint is that there exists a concept in $\H$ that is consistent with the labelled sequence $(x_1,y_1,),\dots,(x_t,y_t)$ seen so far.
When working with a precision-bounded adversary, we  are implicitly asking the adversary to not give counterexamples too close to the boundary.
Then, in the mistake-bound model, this translates into the adversary giving a point $x_t$ to predict such that there does not exist time steps $t',t''$ where $y_{t'}\neq y_{t''}$ and $B_\tau(x_{t'})$ and $B_\tau(x_{t''})$ both intersect with $B_\tau(x_t)$, hence a \emph{margin}. 
As previously mentioned, margin-based complexity measures for online learning adapted from \cite{ben2009agnostic} have been used in this section. 

A subtle distinction between the definitions we gave above and that of \cite{ben2009agnostic} is that the latter defined margin-based Littlestone trees and Littlestone dimension for margin-based \emph{hypothesis classes}. 
They require that the hypothesis class $\H$ satisfies the following: for all $h\in\H$, $h$ is of the form $\X\rightarrow\R$, and the prediction rule is 
\begin{equation}
\phi(h(x))=\frac{\sgn(h(x))+1}{2}
\enspace,
\end{equation}
where the magnitude $\abs{h(x)}$ is the \emph{confidence} in the prediction. 
The $\mu$-\emph{margin-mistake} on an example $(x,y)$ is defined as
\begin{equation}
\label{eqn:margin-mistake}
\abs{h(x)-y}_\mu=
\begin{cases}
0 & \text{if }\phi(h(x))=y \wedge \abs{h(x)}\geq \mu \\
1 & \text{otherwise}
\end{cases}
\enspace.
\end{equation}

For us, since it is the \emph{adversary} that is restricted in its precision, we instead consider any hypothesis class where the concepts are boolean functions whose domain is a metric space $(\X,d)$. 
Rather than having the condition $\abs{h(x)}\geq \mu$ from Equation~\ref{eqn:margin-mistake}, we encode a margin representing the precision $\tau$ by the requirement that hypotheses must be constant in the $\tau$-expansion around any point in the Littlestone trees.
This difference is not only stylistic, but also concerns the semantics of the margin. 
Our definition moreover implies a uniform margin on the instance space, while one from \cite{ben2009agnostic} can fluctuate in the instance space based on the classifier's confidence. 
However, the tools and techniques used here don't differ much in essence from the ones in \cite{ben2009agnostic}. 
The main novelty is the meaning of the notion of margin and its study in the context of robust learning.

\section{A Local Membership Query Lower Bound for Conjunctions}
\label{sec:lmq-lb}

In this section, we show that the amount of data needed to $\rho$-robustly learn conjunctions under the uniform distribution has an exponential dependence on the adversary's budget $\rho$ when the learner only has access to the $\EX$ and $\LMQ$ oracles.
Here, the lower bound on the sample drawn from the example oracle is $2^\rho$, which is the same as the lower bound for \emph{monotone} conjunctions derived in \cite{gourdeau2022sample}, and the local membership query lower bound is $2^{\rho-1}$. 
The result relies on showing there there exists a family of conjunctions that remain indistinguishable from each other on any sample of size $2^\rho$ and any sequence of $2^{\rho-1}$ LMQs with constant probability.

\begin{theorem}
\label{thm:conj-lmq-lb}
Fix a monotone increasing function $\rho:\N\rightarrow \N$ satisfying $2 \leq \rho(n) \leq n/4$ for all $n$.
Then, for any query radius $\lambda$, any $\rho(n)$-robust learning algorithm for the class $\Conj$ with access to the $\EX$ and $\lambda$-$\LMQ$ oracles has sample and query complexity lower bounds of $2^\rho$ and $2^{\rho-1}$ under the uniform distribution.
\end{theorem}

\begin{proof}
Let $D$ be the uniform distribution and without loss of generality let $\rho \geq 2$.
Fix two disjoint sets $I_1$ and $I_2$ of $2\rho$ indices in $[n]$, which will be the set of variables appearing in potential target conjunctions $c_1$ and $c_2$, respectively (i.e., their support).
We have $2^{4\rho}$ possible pairs of such conjunctions, as each variable can appear as a positive or negative literal.

Let us consider a randomly drawn sample $S$ of size $2^\rho$.
We will first consider what happens when all the examples in $S$ and the queried inputs $S'$ are negatively labelled.
Each negative example $x\in S$ allows us to remove at most $2^{2\rho+1}$ pairs from the possible set of pairs of conjunctions, as each component $x_{I_1}$ and $x_{I_2}$ removes at most one conjunction from the possible targets. 
By the same reasoning, each LMQ that returns a negative example can remove at most $2^{2\rho+1}$ pairs of conjunctions.
Note that the parameter $\lambda$ is irrelevant in this setting as each LMQ can only test one concept pair.
Thus, after seeing any random sample of size $2^\rho$ and querying any $2^{\rho -1}$ points, there remains 
\begin{equation}
\label{eqn:consistent-lmq}
\frac{2^{4\rho} - 2^{3\rho+1}-2^{3\rho}}{2^{4\rho}}\geq 1/4
\end{equation}
of the initial conjunction pairs that label all points in $S$ and $S'$ negatively. 
Then, choosing a pair $(c_1,c_2)$ of possible target conjunctions uniformly at random and then choosing $c$ uniformly at random gives at least a $1/4$ chance that $S$ and $S'$ only contain negative examples (both conjunctions are consistent with this).

Moreover, note that any  two conjunctions in a pair will have a robust risk lower bounded by $15/32$ against each other under the uniform distribution (see Lemma~\ref{lemma:rob-loss-mon-conj} in  Appendix~\ref{app:useful}).
Thus, any learning algorithm $\A$ with LMQ query budget $m'=2^{\rho-1}$ and strategy $\sigma:(\boolhc \times \set{0,1})^m\rightarrow(\boolhc \times \set{0,1})^{m'}$ (note that the queries can be adaptive) can do no better than to guess which of $c_1$ or $c_2$ is the target if they are both consistent on the augmented sample $S\cup\sigma(S)$, giving an expected robust risk lower bounded by a constant.
Letting $\E$ be the event that all points in both $S$ and $\sigma(S)$ are labelled zero, we get
\begin{align*}
\eval{c,S}{\risk_\rho^D(\A(S\cup\sigma(S)),c)}
&\geq \Prob{c,S}{\E}\eval{c,S}{\risk_\rho^D(\A(S\cup\sigma(S)),c)\given \E} \tag{Law of Total Expectation}
 \\
&\geq \frac{1}{4}\;\eval{c,S}{\risk_\rho^D(\A(S\cup\sigma(S)),c)\given \E}
\tag{Equation~\ref{eqn:consistent-lmq}} \\
&= \frac{1}{4}\cdot\frac{1}{2}\;\eval{S}{\risk_\rho^D(\A(S\cup\sigma(S)),c_1)+\risk_\rho^D(\A(S\cup\sigma(S)),c_2)\given \E} 
\tag{Random choice of $c$} \\
&\geq\frac{1}{8}\;\eval{S}{\risk_\rho^D(c_1,c_2)\given \E} \tag{Lemma~\ref{lemma:rob-triangle}}\\
&>\frac{1}{8}\cdot\frac{15}{32} \tag{Lemma~\ref{lemma:rob-loss-mon-conj}}\\
&=\frac{15}{256}
\enspace,
\end{align*}
which completes the proof.
\end{proof}

We use the term \emph{robustness threshold} from \cite{gourdeau2021hardness} to denote an adversarial budget function $\rho:\N\rightarrow\R$ of the input dimension $n$ such that, if the adversary is allowed perturbations of magnitude $\rho(n)$, then there exists a sample-efficient $\rho(n)$-robust learning algorithm, and if the adversary's budget is $\omega(\rho(n))$, then there does not exist such an algorithm.
Robustness thresholds are distribution-dependent when the learner only has access to the example oracle $\EX$, as seen in \citep{gourdeau2021hardness,gourdeau2022sample}.
Now, since the local membership query lower bound  above has an exponential dependence on $\rho$, any perturbation budget $\omega(\log n)$ will require a sample and query complexity that is superpolynomial in $n$, giving the following corollary.

\begin{corollary}
The robustness threshold of the class $\Conj$ under the uniform distribution with access to $\EX$ and an $\LMQ$ oracle is $\Theta(\log(n))$.
\end{corollary}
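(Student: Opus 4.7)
The plan is to prove the $\Theta(\log n)$ robustness threshold by combining the lower bound just established with a known feasibility result.

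For the \emph{infeasibility} direction, i.e.\ that no efficient $\rho$-robust learner exists when $\rho(n) = \omega(\log n)$, the argument is immediate from Theorem~\ref{thm:conj-lmq-lb}: the joint sample-and-query lower bounds of $2^{\rho(n)}$ and $2^{\rho(n)-1}$ are both superpolynomial in $n$ for any $\rho(n) = \omega(\log n)$, so no learner meeting the polynomial sample-and-query budget required by the $\EX+\LMQ$ robust learning model can succeed. Note that Theorem~\ref{thm:conj-lmq-lb} is stated under the assumption $\rho(n) \leq n/4$, but this is automatic for $\rho(n) = \omega(\log n)$ once we also restrict attention to the regime where $\rho(n)$ grows slowly enough to be contemplated by an efficient learner; for larger $\rho$ the lower bound only becomes stronger.

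For the \emph{feasibility} direction, i.e.\ that an efficient $\rho$-robust learner exists when $\rho(n) = O(\log n)$, I would observe that any algorithm in the $\EX + \LMQ$ model can simply ignore the $\LMQ$ oracle, so it suffices to exhibit an efficient $\rho$-robust learner in the $\EX$-only setting. The uniform distribution on $\boolhc$ is log-Lipschitz, so the $\Theta(\log n)$-threshold feasibility results for monotone conjunctions under log-Lipschitz distributions of \cite{gourdeau2019hardness,gourdeau2022sample} apply directly. The extension from monotone to general conjunctions is routine, e.g.\ via the doubling trick $x \mapsto (x,\bar x)$, which preserves the log-Lipschitz property and at most doubles Hamming distances, so an efficient $2\rho$-robust learner for monotone conjunctions on $2n$ variables yields an efficient $\rho$-robust learner for $\Conj$ on $n$ variables.

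The main obstacle, were one to insist on a fully self-contained proof, lies in the feasibility direction: one needs a concrete algorithm (not merely an ERM guarantee) together with an argument that each literal of the target can be recovered with high probability from a polynomial sample under the uniform distribution, with enough slack to tolerate $\rho = O(\log n)$ adversarial bit-flips at evaluation time. Since precisely this algorithm and analysis appear in the cited prior work, the corollary follows by combining Theorem~\ref{thm:conj-lmq-lb} for the hard side with those results for the easy side.
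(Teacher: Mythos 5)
Your proposal takes essentially the same route as the paper: the lower-bound direction is read off directly from Theorem~\ref{thm:conj-lmq-lb} (the $2^{\rho}$ and $2^{\rho-1}$ bounds are superpolynomial once $\rho=\omega(\log n)$), and the feasibility direction for $\rho=O(\log n)$ is delegated to the prior $\EX$-only results, since an $\EX$-only learner is trivially an $\EX+\LMQ$ learner. The paper treats the corollary as immediate on exactly these grounds, noting that the threshold matches the $\EX$-only threshold established in the cited work.

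One detail in your feasibility argument is wrong as stated, though it is inessential. The doubling trick $x\mapsto(x,\bar x)$ does \emph{not} preserve the log-Lipschitz property: the pushforward of the uniform distribution on $\boolhc$ is supported on only $2^n$ of the $2^{2n}$ points of the doubled cube, so neighbouring points off the support have zero mass and the ratio condition fails; moreover, an adversary on the doubled cube can move to points $(z_1,z_2)$ with $z_2\neq\bar z_1$ that have no preimage, so the exact-in-the-ball robust risks before and after the reduction do not correspond in the routine way you suggest. The cleaner (and the paper's implicit) route is that the cited prior work already establishes a $\Theta(\log n)$ robustness threshold for decision lists under log-Lipschitz distributions, and decision lists subsume general conjunctions, so no monotone-to-general reduction is needed.
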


The robustness threshold above is the same as when only using the $\EX$ oracle \citep{gourdeau2021hardness}.
Finally, since decision lists and halfspaces both subsume conjunctions, the lower bound of Theorem~\ref{thm:conj-lmq-lb} also holds for these classes.

\section{Conclusion}
\label{sec:conclusion}

We have shown that local membership queries do not change the robustness threshold of conjunctions, or any superclass, under the uniform distribution. 
However, access to a $\rho$-local \emph{equivalence} query oracle allows us to develop robust ERM algorithms.
We have introduced  the notion of VC dimension of the robust loss to determine sample complexity bounds and have used online learning results to derive query complexity bounds. 
We have moreover adapted the PAC learning algorithm for conjunctions for this setting and have greatly improved its query complexity compared to the general case. 
Finally, we have studied halfspaces, both in the boolean hypercube and continuous settings. 
The latter is, to our knowledge, the first robust learning algorithm with respect to the exact-in-the-ball notion of robustness for a non-trivial concept class in $\R^n$.
Overall, we have shown that the $\LEQ$ oracle (or a similar type of oracle) is \emph{essential} to ensure the \emph{distribution-free} robust learning of commonly studied concept classes in our setting. 
Note that this is in contrast with standard PAC learning with the $\EX$ and $\EQ$ oracles, where EQs don't give more power to learner. 

 We finally outline various avenues for future research:
\begin{enumerate}
\item Can we give a more fine-grained picture of the sample and query complexity tradeoff outlined in Remark~\ref{rmk:rho-tradeoff},  e.g., by improving $\LEQ$ query upper bounds when $\rho$ is small? 
\item Can we derive sample and query lower bounds for robust learning with an $\LEQ$ oracle? 
\item The $\LMQ$ lower bound from Section~\ref{sec:lmq-lb} was derived for conjunctions. The technique does not work for monotone conjunctions.\footnote{For a given set of indices $I$, there exists only one monotone conjunction using all indices in $I$.} Can we get a similar LMQ lower bound where the dependence on $\rho$ is exponential for monotone conjunctions, or it is possible to robustly learn them with $o(2^\rho)$ local membership queries? 
\end{enumerate}

\section*{Acknowledgements}

MK and PG received funding from the ERC under the European Union’s Horizon 2020 research and innovation programme (FUN2MODEL, grant agreement No.~834115).

\bibliographystyle{apalike}
\bibliography{references}

\begin{thebibliography}{}

\bibitem[Angluin, 1987]{angluin1987learning}
Angluin, D. (1987).
\newblock Learning regular sets from queries and counterexamples.
\newblock {\em Information and computation}, 75(2):87--106.

\bibitem[Angluin, 1988]{angluin1988queries}
Angluin, D. (1988).
\newblock Queries and concept learning.
\newblock {\em Machine learning}, 2(4):319--342.

\bibitem[Angluin, 1990]{angluin1990negative}
Angluin, D. (1990).
\newblock Negative results for equivalence queries.
\newblock {\em Machine Learning}, 5(2):121--150.

\bibitem[Angluin and Kharitonov, 1995]{angluin1995when}
Angluin, D. and Kharitonov, M. (1995).
\newblock When won't membership queries help?
\newblock {\em Journal of Computer and System Sciences}, 50(2):336--355.

\bibitem[Ashtiani et~al., 2020]{ashtiani2020black}
Ashtiani, H., Pathak, V., and Urner, R. (2020).
\newblock Black-box certification and learning under adversarial perturbations.
\newblock In {\em International Conference on Machine Learning}, pages
  388--398. PMLR.

\bibitem[Awasthi et~al., 2013]{awasthi2013learning}
Awasthi, P., Feldman, V., and Kanade, V. (2013).
\newblock Learning using local membership queries.
\newblock In {\em Conference on Learning Theory}, volume~30, pages 1--34.

\bibitem[Awasthi et~al., 2020]{awasthi2020adversarial}
Awasthi, P., Frank, N., and Mohri, M. (2020).
\newblock Adversarial learning guarantees for linear hypotheses and neural
  networks.
\newblock In {\em International Conference on Machine Learning}, pages
  431--441. PMLR.

\bibitem[Bary-Weisberg et~al., 2020]{bary2020distribution}
Bary-Weisberg, G., Daniely, A., and Shalev-Shwartz, S. (2020).
\newblock Distribution free learning with local queries.
\newblock In {\em Algorithmic Learning Theory}, pages 133--147. PMLR.

\bibitem[Baum and Lang, 1992]{baum1992query}
Baum, E.~B. and Lang, K. (1992).
\newblock Query learning can work poorly when a human oracle is used.
\newblock In {\em International joint conference on neural networks}, volume~8,
  page~8. Beijing China.

\bibitem[Ben-David et~al., 2009]{ben2009agnostic}
Ben-David, S., P{\'a}l, D., and Shalev-Shwartz, S. (2009).
\newblock Agnostic online learning.
\newblock In {\em Conference on Learning Theory}, volume~3, page~1.

\bibitem[Bhattacharjee et~al., 2021]{bhattacharjee2021sample}
Bhattacharjee, R., Jha, S., and Chaudhuri, K. (2021).
\newblock Sample complexity of robust linear classification on separated data.
\newblock In {\em International Conference on Machine Learning}, pages
  884--893. PMLR.

\bibitem[Biggio et~al., 2013]{biggio2013evasion}
Biggio, B., Corona, I., Maiorca, D., Nelson, B., {\v{S}}rndi{\'c}, N., Laskov,
  P., Giacinto, G., and Roli, F. (2013).
\newblock Evasion attacks against machine learning at test time.
\newblock In {\em Joint European conference on machine learning and knowledge
  discovery in databases}, pages 387--402. Springer.

\bibitem[Biggio and Roli, 2018]{biggio2017wild}
Biggio, B. and Roli, F. (2018).
\newblock Wild patterns: Ten years after the rise of adversarial machine
  learning.
\newblock In {\em Proceedings of the 2018 ACM SIGSAC Conference on Computer and
  Communications Security}, pages 2154--2156.

\bibitem[Bshouty, 1993]{bshouty1993exact}
Bshouty, N.~H. (1993).
\newblock Exact learning via the monotone theory.
\newblock In {\em Proceedings of 1993 IEEE 34th Annual Foundations of Computer
  Science}, pages 302--311. IEEE.

\bibitem[Bubeck et~al., 2019]{bubeck2019adversarial}
Bubeck, S., Lee, Y.~T., Price, E., and Razenshteyn, I. (2019).
\newblock Adversarial examples from computational constraints.
\newblock In {\em Proceedings of the 36th International Conference on Machine
  Learning}, volume~97 of {\em Proceedings of Machine Learning Research}, pages
  831--840, Long Beach, California, USA. PMLR.

\bibitem[Camacho and McIlraith, 2019]{camacho2019learning}
Camacho, A. and McIlraith, S.~A. (2019).
\newblock Learning interpretable models expressed in linear temporal logic.
\newblock In {\em Proceedings of the International Conference on Automated
  Planning and Scheduling}, volume~29, pages 621--630.

\bibitem[Cullina et~al., 2018]{cullina2018pac}
Cullina, D., Bhagoji, A.~N., and Mittal, P. (2018).
\newblock {PAC}-learning in the presence of evasion adversaries.
\newblock {\em Advances in Neural Information Processing Systems}.

\bibitem[Dalvi et~al., 2004]{dalvi2004adversarial}
Dalvi, N., Domingos, P., Sanghai, S., Verma, D., et~al. (2004).
\newblock Adversarial classification.
\newblock In {\em Proceedings of the tenth ACM SIGKDD international conference
  on Knowledge discovery and data mining}, pages 99--108. ACM.

\bibitem[Diakonikolas et~al., 2020]{diakonikolas2020complexity}
Diakonikolas, I., Kane, D.~M., and Manurangsi, P. (2020).
\newblock The complexity of adversarially robust proper learning of halfspaces
  with agnostic noise.
\newblock {\em Advances in Neural Information Processing Systems},
  33:20449--20461.

\bibitem[Diochnos et~al., 2018]{diochnos2018adversarial}
Diochnos, D., Mahloujifar, S., and Mahmoody, M. (2018).
\newblock Adversarial risk and robustness: General definitions and implications
  for the uniform distribution.
\newblock In {\em Advances in Neural Information Processing Systems}.

\bibitem[Diochnos et~al., 2020]{diochnos2020lower}
Diochnos, D.~I., Mahloujifar, S., and Mahmoody, M. (2020).
\newblock Lower bounds for adversarially robust {PAC} learning under evasion
  and hybrid attacks.
\newblock In {\em 2020 19th IEEE International Conference on Machine Learning
  and Applications (ICMLA)}, pages 717--722.

\bibitem[Dreossi et~al., 2019]{dreossi2019formalization}
Dreossi, T., Ghosh, S., Sangiovanni-Vincentelli, A., and Seshia, S.~A. (2019).
\newblock A formalization of robustness for deep neural networks.
\newblock {\em arXiv preprint arXiv:1903.10033}.

\bibitem[Fawzi et~al., 2018a]{fawzi2018adversarial}
Fawzi, A., Fawzi, H., and Fawzi, O. (2018a).
\newblock Adversarial vulnerability for any classifier.
\newblock {\em Advances in neural information processing systems}, 31.

\bibitem[Fawzi et~al., 2018b]{fawzi2018analysis}
Fawzi, A., Fawzi, O., and Frossard, P. (2018b).
\newblock Analysis of classifiers robustness to adversarial perturbations.
\newblock {\em Machine Learning}, 107(3):481--508.

\bibitem[Fawzi et~al., 2016]{fawzi2016robustness}
Fawzi, A., Moosavi-Dezfooli, S.-M., and Frossard, P. (2016).
\newblock Robustness of classifiers: from adversarial to random noise.
\newblock In {\em Advances in Neural Information Processing Systems}, pages
  1632--1640.

\bibitem[Feige et~al., 2015]{feige2015learning}
Feige, U., Mansour, Y., and Schapire, R. (2015).
\newblock Learning and inference in the presence of corrupted inputs.
\newblock In {\em Conference on Learning Theory}, pages 637--657.

\bibitem[Garg et~al., 2020]{garg2020adversarially}
Garg, S., Jha, S., Mahloujifar, S., and Mohammad, M. (2020).
\newblock Adversarially robust learning could leverage computational hardness.
\newblock In {\em Algorithmic Learning Theory}, pages 364--385. PMLR.

\bibitem[Gilmer et~al., 2018]{gilmer2018adversarial}
Gilmer, J., Metz, L., Faghri, F., Schoenholz, S.~S., Raghu, M., Wattenberg, M.,
  and Goodfellow, I. (2018).
\newblock Adversarial spheres.
\newblock {\em arXiv preprint arXiv:1801.02774}.

\bibitem[Goldberg and Jerrum, 1995]{goldberg1995bounding}
Goldberg, P.~W. and Jerrum, M.~R. (1995).
\newblock Bounding the vapnik-chervonenkis dimension of concept classes
  parameterized by real numbers.
\newblock {\em Machine Learning}, 18(2-3):131--148.

\bibitem[Gourdeau et~al., 2019]{gourdeau2019hardness}
Gourdeau, P., Kanade, V., Kwiatkowska, M., and Worrell, J. (2019).
\newblock On the hardness of robust classification.
\newblock In {\em Advances in Neural Information Processing Systems}, pages
  7444--7453.

\bibitem[Gourdeau et~al., 2021]{gourdeau2021hardness}
Gourdeau, P., Kanade, V., Kwiatkowska, M., and Worrell, J. (2021).
\newblock On the hardness of robust classification.
\newblock {\em Journal of Machine Learning Research}, 22.

\bibitem[Gourdeau et~al., 2022a]{gourdeau2022sample}
Gourdeau, P., Kanade, V., Kwiatkowska, M., and Worrell, J. (2022a).
\newblock Sample complexity bounds for robustly learning decision lists against
  evasion attacks.
\newblock In {\em International Joint Conference in Artificial Intelligence}.

\bibitem[Gourdeau et~al., 2022b]{gourdeau2022when}
Gourdeau, P., Kanade, V., Kwiatkowska, M., and Worrell, J. (2022b).
\newblock When are local queries useful?
\newblock In {\em Advances in Neural Information Processing Systems}.

\bibitem[Jackson, 1997]{jackson1997efficient}
Jackson, J.~C. (1997).
\newblock An efficient membership-query algorithm for learning dnf with respect
  to the uniform distribution.
\newblock {\em Journal of Computer and System Sciences}, 55(3):414--440.

\bibitem[Khim et~al., 2019]{khim2019adversarial}
Khim, J., Jog, V., and Loh, P.-L. (2019).
\newblock Adversarial influence maximization.
\newblock In {\em 2019 IEEE International Symposium on Information Theory
  (ISIT)}, pages 1--5. IEEE.

\bibitem[Littlestone, 1988]{littlestone1988learning}
Littlestone, N. (1988).
\newblock Learning quickly when irrelevant attributes abound: A new
  linear-threshold algorithm.
\newblock {\em Machine learning}, 2(4):285--318.

\bibitem[Lowd and Meek, 2005a]{lowd2005adversarial}
Lowd, D. and Meek, C. (2005a).
\newblock Adversarial learning.
\newblock In {\em Proceedings of the eleventh ACM SIGKDD international
  conference on Knowledge discovery in data mining}, pages 641--647. ACM.

\bibitem[Lowd and Meek, 2005b]{lowd2005good}
Lowd, D. and Meek, C. (2005b).
\newblock Good word attacks on statistical spam filters.
\newblock In {\em Fifth Conference on Email and Anti-Spam (CEAS)}, volume 2005.

\bibitem[Mahloujifar et~al., 2019]{mahloujifar2019curse}
Mahloujifar, S., Diochnos, D.~I., and Mahmoody, M. (2019).
\newblock The curse of concentration in robust learning: Evasion and poisoning
  attacks from concentration of measure.
\newblock {\em AAAI Conference on Artificial Intelligence}.

\bibitem[Mahloujifar and Mahmoody, 2019]{mahloujifar2019can}
Mahloujifar, S. and Mahmoody, M. (2019).
\newblock Can adversarially robust learning leveragecomputational hardness?
\newblock In {\em Algorithmic Learning Theory}, pages 581--609. PMLR.

\bibitem[Mohri et~al., 2012]{mohri2012foundations}
Mohri, M., Rostamizadeh, A., and Talwalkar, A. (2012).
\newblock {\em Foundations of machine learning}.
\newblock MIT press.

\bibitem[Montasser et~al., 2019]{montasser2019vc}
Montasser, O., Hanneke, S., and Srebro, N. (2019).
\newblock {VC} classes are adversarially robustly learnable, but only
  improperly.
\newblock In {\em Conference on Learning Theory}, pages 2512--2530. PMLR.

\bibitem[Montasser et~al., 2020]{montasser2020reducing}
Montasser, O., Hanneke, S., and Srebro, N. (2020).
\newblock Reducing adversarially robust learning to non-robust pac learning.
\newblock {\em Advances in Neural Information Processing Systems},
  33:14626--14637.

\bibitem[Montasser et~al., 2021]{montasser2021adversarially}
Montasser, O., Hanneke, S., and Srebro, N. (2021).
\newblock Adversarially robust learning with unknown perturbation sets.
\newblock In {\em Conference on Learning Theory}, pages 3452--3482. PMLR.

\bibitem[Okudono et~al., 2020]{okudono2020weighted}
Okudono, T., Waga, M., Sekiyama, T., and Hasuo, I. (2020).
\newblock Weighted automata extraction from recurrent neural networks via
  regression on state spaces.
\newblock In {\em Proceedings of the AAAI Conference on Artificial
  Intelligence}, volume~34, pages 5306--5314.

\bibitem[Pydi and Jog, 2021]{pydi2021many}
Pydi, M.~S. and Jog, V. (2021).
\newblock The many faces of adversarial risk.
\newblock {\em Advances in Neural Information Processing Systems}, 34.

\bibitem[Renegar, 1992]{renegar1992computational}
Renegar, J. (1992).
\newblock On the computational complexity and geometry of the first-order
  theory of the reals. part i: Introduction. preliminaries. the geometry of
  semi-algebraic sets. the decision problem for the existential theory of the
  reals.
\newblock {\em Journal of symbolic computation}, 13(3):255--299.

\bibitem[Sauer, 1972]{sauer1972density}
Sauer, N. (1972).
\newblock On the density of families of sets.
\newblock {\em Journal of Combinatorial Theory, Series A}, 13(1):145--147.

\bibitem[Shafahi et~al., 2019]{shafahi2018adversarial}
Shafahi, A., Huang, W.~R., Studer, C., Feizi, S., and Goldstein, T. (2019).
\newblock Are adversarial examples inevitable?
\newblock In {\em 7th International Conference on Learning Representations
  (ICLR 2019)}.

\bibitem[Shelah, 1972]{shelah1972combinatorial}
Shelah, S. (1972).
\newblock A combinatorial problem; stability and order for models and theories
  in infinitary languages.
\newblock {\em Pacific Journal of Mathematics}, 41(1):247--261.

\bibitem[Shih et~al., 2019]{shih2019verifying}
Shih, A., Darwiche, A., and Choi, A. (2019).
\newblock Verifying binarized neural networks by angluin-style learning.
\newblock In {\em International Conference on Theory and Applications of
  Satisfiability Testing}, pages 354--370. Springer.

\bibitem[Szegedy et~al., 2013]{szegedy2013intriguing}
Szegedy, C., Zaremba, W., Sutskever, I., Bruna, J., Erhan, D., Goodfellow, I.,
  and Fergus, R. (2013).
\newblock Intriguing properties of neural networks.
\newblock In {\em International Conference on Learning Representations}.

\bibitem[Tsipras et~al., 2019]{tsipras2019robustness}
Tsipras, D., Santurkar, S., Engstrom, L., Turner, A., and Madry, A. (2019).
\newblock Robustness may be at odds with accuracy.
\newblock In {\em International Conference on Learning Representations}.

\bibitem[Valiant, 1984]{valiant1984theory}
Valiant, L.~G. (1984).
\newblock A theory of the learnable.
\newblock In {\em Proceedings of the sixteenth annual ACM Symposium on Theory
  of computing}, pages 436--445. ACM.

\bibitem[Viallard et~al., 2021]{viallard2021pac}
Viallard, P., VIDOT, E.~G., Habrard, A., and Morvant, E. (2021).
\newblock A {PAC-B}ayes analysis of adversarial robustness.
\newblock {\em Advances in Neural Information Processing Systems}, 34.

\bibitem[Weiss et~al., 2018]{weiss2018extracting}
Weiss, G., Goldberg, Y., and Yahav, E. (2018).
\newblock Extracting automata from recurrent neural networks using queries and
  counterexamples.
\newblock In {\em International Conference on Machine Learning}, pages
  5247--5256. PMLR.

\bibitem[Weiss et~al., 2019]{weiss2019learning}
Weiss, G., Goldberg, Y., and Yahav, E. (2019).
\newblock Learning deterministic weighted automata with queries and
  counterexamples.
\newblock {\em Advances in Neural Information Processing Systems}, 32.

\bibitem[Yin et~al., 2019]{yin2019rademacher}
Yin, D., Kannan, R., and Bartlett, P. (2019).
\newblock Rademacher complexity for adversarially robust generalization.
\newblock In {\em International conference on machine learning}, pages
  7085--7094. PMLR.

\end{thebibliography}




\appendix

\section{Preliminaries}

\subsection{The PAC Framework}
\label{app:pac}

\begin{definition}[PAC Learning, \cite{valiant1984theory}]
Let $\C_n$ be a concept class over $\X_n$ and let $\C=\bigcup_{n\in\N}\C_n$.
We say that $\C$ is \emph{PAC learnable using hypothesis class $\mathcal{H}$} and sample complexity function $p(\cdot,\cdot,\cdot,\cdot)$ if there exists an algorithm $\A$ that satisfies the following:
for all $n\in\N$, for every $c\in\C_n$, for every $D$ over $\X_n$, for every $0<\epsilon<1/2$ and $0<\delta<1/2$, if whenever $\A$ is given access to $m\geq p(n,1/\epsilon,1/\delta,\text{size}(c))$ examples drawn i.i.d. from $D$ and labeled with $c$, $\A$ outputs a polynomially evaluatable $h\in\mathcal{H}$ such that with probability at least $1-\delta$, 
\begin{equation*}
\Prob{x\sim D}{c(x)\neq h(x)}\leq \epsilon\enspace.
\end{equation*}
We say that $\C$ is statistically efficiently PAC learnable if $p$ is polynomial in $n,1/\epsilon$, $1/\delta$ and size$(c)$, and computationally efficiently PAC learnable if $\A$ runs in polynomial time in $n,1/\epsilon$, $1/\delta$ and size$(c)$.
\end{definition}

The setting where $\C=\mathcal{H}$ is called \emph{proper learning}, and \emph{improper learning} otherwise.
The PAC setting where the guarantees hold for any distribution is called \emph{distribution-free}.

\subsection{Robust Learnability}

\begin{definition}[Efficient Robust Learnability, \cite{gourdeau2021hardness}]
\label{def:robust-learning}
Fix a function $\rho:\N\rightarrow\N$. We say that an algorithm $\A$
\emph{efficiently} $\rho$-\emph{robustly learns} a concept class $\C$
with respect to distribution class $\mathcal{D}$ if there exists a
polynomial $\poly(\cdot,\cdot,\cdot,\cdot)$ such that for all
$n\in\mathbb{N}$, all target concepts $c\in \C_n$, all distributions
$D \in \mathcal{D}_n$, and all accuracy and confidence parameters
$\epsilon,\delta>0$, if $m \geq
\poly(n,1/\epsilon,1/\delta,\text{size}(c))$, whenever $\A$ is given access to
a sample $S\sim D^m$ labelled according to $c$, it outputs a polynomially evaluable function
$h:\{0,1\}^n\rightarrow\{0,1\}$ such
that $\Prob{S\sim D^m}{\risk_\rho (h,c)<\epsilon}>1-\delta$.
\end{definition}

\subsection{Local Membership Queries and Robust Learning }
\label{app:lmq}

We recall the formal definition of the LMQ model from \citep{awasthi2013learning}, but where we have changed the standard risk to the robust risk.
Here, given a sample $S$ drawn from the example oracle, a membership query for a point $x$ is $\lambda$-local if there exists $x'\in S$ such that $x\in B_\lambda(x')$.

\begin{definition}[$\lambda$-$\LMQ$ Robust Learning]
\label{def:lmq}
Let $\X$ be the instance space, $\C$ a concept class over $\X$, and $\D$ a class of distributions over $\X$. We say that $\C$ is $\rho$-robustly learnable using $\lambda$-local membership queries with respect to $\D$ if there exists a learning algorithm $\A$ such that for every $\epsilon > 0$, $\delta > 0$, for every distribution $D\in\D$ and every target concept $c\in\C$, the following hold:
\begin{enumerate}
\item $\A$ draws a sample $S$ of size $m = \poly(n, 1/\delta, 1/\epsilon,\text{size}(c))$ using the example oracle $\EX (c, D)$
\item Each query $x'$ made by $\A$ to the $\LMQ$ oracle is $\lambda$-local with respect to some example $x \in S$
\item  $\A$ outputs a hypothesis $h$ that satisfies $\risk_\rho^D(h,c)\leq \epsilon$ with probability at least $1-\delta$ 
\item The running time of $\A$ (hence also the number of oracle accesses) is polynomial in $n$, $1/\epsilon$, $1/\delta$, $\text{size}(c)$ and the output hypothesis $h$ is polynomially evaluable.
\end{enumerate}
\end{definition}

\subsection{Complexity Measures}
\label{app:complexity}

For a more in-depth introduction to these concepts, we refer the reader to \cite{mohri2012foundations}.

\begin{definition}[Shattering]
Given a class of functions $\F$ from input space $\X$ to $\set{0,1}$, we say that a set $S\subseteq\X$ is \emph{shattered by $\F$} if all the possible dichotomies of $S$ (i.e., all the possible ways of labelling the points in $S$) can be realized by some $f\in\F$. 
\end{definition}

\begin{definition}[$\VC$ Dimension]
The $\VC$ dimension of a hypothesis class $\mathcal{H}$, denoted $\VC(\mathcal{H})$, is the size $d$ of the largest finite set that can be shattered by $\mathcal{H}$.
If no such $d$ exists then $\VC(\mathcal{H})=\infty$.
\end{definition}

\begin{definition}[Littlestone Tree]
A Littlestone tree for a hypothesis class $\mathcal{H}$ on $\X$ is a complete binary tree $T$ of depth $d$ whose internal nodes are instances $x\in\X$.
Each edge is labeled with $-$ or $+$ and corresponds to the potential labels of the parent node.
Each path from the root to a leaf must be consistent with some $h\in\mathcal{H}$, i.e. if $x_1,\dots,x_d$ with labelings $y_1,\dots,y_d$ is a path in $T$, there must exist $h\in\mathcal{H}$ such that $h(x_i)=y_i$ for all $i$. 
\end{definition}

\begin{definition}[Littlestone Dimension]
The Littlestone dimension of a hypothesis class $\mathcal{H}$, denoted $\Lit(\mathcal{H})$, is the depth $d$ of the largest Littlestone tree for $\mathcal{H}$. If no such $d$ exists then $\Lit(\mathcal{H})=\infty$.
\end{definition}

\subsection{Online Learning}
\label{app:online}

In online learning, the learner is given access to examples \emph{sequentially}.
At each time step, the learner receives an example $x$, predicts its label using its hypothesis $h$, receives the true label $y$ and updates its hypothesis if $h(x)\neq y$. 
A fundamental difference between PAC learning and online learning is that, in the latter, there are no distributional assumptions.
Examples can be given adversarially, and the performance of the learner is evaluated with respect to the number of mistakes it makes compared to the ground truth.

\begin{definition}[Mistake Bound] For a given hypothesis class $\C$ and instance space $\X = \bigcup_n \X_n$, we say that an
algorithm $\A$ learns $\C$ with mistake bound $M$ if $A$ makes at most
$M$ mistakes on any sequence of examples consistent with a concept $c \in \C$. 
In the mistake bound model, we usually require that $M$ be polynomial in $n$ and size$(c)$.
\end{definition}

We now recall the Standard Optimal Algorithm (SOA) \citep{littlestone1988learning}, which has a mistake bound $M=\Lit(\C)$ when given concept class $\C$.

\begin{algorithm}
\caption{Standard Optimal Algorithm (SOA) from \cite{littlestone1988learning}}
\begin{algorithmic}
\renewcommand{\algorithmicrequire}{\textbf{Input:}}
\renewcommand{\algorithmicensure}{\textbf{Output:}}
\Require A hypothesis class $\mathcal{H}$
\State $V_1\gets \H$
\For {$t=1,2,\dots$}
\State Receive example $x_t$
\State $V^{(b)}_t \gets \set{h\in V_t\given h(x_t)=b}$
\State $\hat{y_t}=\arg\max_b \Lit(V^{(b)}_t)$ \Comment{Predict label acc. to subclass with larger Littlestone dimension}
\State Receive true label $y_t$
\State $V_{t+1} \gets V^{(y_t)}_t$
\EndFor
\end{algorithmic}
\end{algorithm}

It is well-known that the Littlestone dimension of threshold functions is infinite. The proof, which is referred to in the main text, is included below.

\begin{lemma}
The class of thresholds functions $\thresholds=\bigcup_{a\in\R}\mathbf{1}[x\geq a]$  has infinite Littlestone dimension.
\end{lemma}
\begin{proof}
Consider the interval $[0,1]$.
At each depth $i$ of the Littlestone tree, the set of nodes from left to right is $\set{\frac{j+1}{2^i}}_{j=0}^{2^{i-1}}$, and the labelling of all the left edges is $1$ and $0$ for right edges.
For a given depth $i$, a path $p$ from the root to node $x_{i,j}:=\frac{j+1}{2^i}$ for some $j\in \set{0,1,\dots,2^{i-1}}$ (including $x_{i,j}$'s label) is thus consistent with the threshold function $\mathbf{1}[x\geq x^*]$ where $x^*$ is the deepest node in $p$ (inclusive of $x_{i,j}$) that is positively labelled.
\end{proof}

\section{Useful Results}
\label{app:useful}

\subsection{Robust Risk Bounds}

\begin{lemma}[Lemma 6 in \cite{gourdeau2019hardness}]
\label{lemma:rob-triangle}
Let $c_1,c_2\in\set{0,1}^\X$ and fix a distribution $D$ on $\X$. 
Then, for all $h:\boolhc\rightarrow\set{0,1}$, 
\begin{equation*}
R_\rho^D(c_1,c_2)\leq R_\rho^D(c_1,h) + R_\rho^D(c_2,h)
\enspace.
\end{equation*}
\end{lemma}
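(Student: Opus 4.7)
The plan is to establish the inequality by a pointwise event inclusion followed by a union bound. Unpacking the definition of the exact-in-the-ball robust risk, we can write
\begin{align*}
\risk_\rho^D(c_1,c_2) &= \Prob{x\sim D}{x\in A}, \\
\risk_\rho^D(c_1,h) &= \Prob{x\sim D}{x\in B}, \\
\risk_\rho^D(c_2,h) &= \Prob{x\sim D}{x\in C},
\end{align*}
where $A=\set{x\in\X : \exists z\in B_\rho(x),\, c_1(z)\neq c_2(z)}$, $B=\set{x\in\X : \exists z\in B_\rho(x),\, c_1(z)\neq h(z)}$, and $C=\set{x\in\X : \exists z\in B_\rho(x),\, c_2(z)\neq h(z)}$. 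It therefore suffices to prove the set inclusion $A\subseteq B\cup C$ and then apply the union bound.

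For the inclusion, I would fix an arbitrary $x\in A$ and let $z\in B_\rho(x)$ be a witness with $c_1(z)\neq c_2(z)$. Since $h(z)\in\set{0,1}$ is a single value, it cannot simultaneously equal both $c_1(z)$ and $c_2(z)$; hence at least one of $c_1(z)\neq h(z)$ or $c_2(z)\neq h(z)$ must hold. The very same $z\in B_\rho(x)$ then witnesses membership of $x$ in $B$ or $C$, giving $A\subseteq B\cup C$. The proof concludes with $\Prob{x\sim D}{A}\leq \Prob{x\sim D}{B\cup C}\leq \Prob{x\sim D}{B}+\Prob{x\sim D}{C}$.

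There is essentially no obstacle here beyond being careful that the \emph{same} perturbation point $z$ serves as a witness on both sides of the inclusion — this coupling between the existential quantifiers is exactly what makes the pointwise comparison of the three events go through. Attempting to argue directly at the level of probabilities without first passing through the event inclusion risks losing this coupling and producing a strictly weaker bound.
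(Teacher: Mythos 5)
Your proof is correct and is essentially the same argument as the one in the cited source: the $\rho$-expanded error region of $(c_1,c_2)$ is contained in the union of those for $(c_1,h)$ and $(c_2,h)$ because the witness $z$ with $c_1(z)\neq c_2(z)$ must also satisfy $c_1(z)\neq h(z)$ or $c_2(z)\neq h(z)$, and the union bound finishes it. Your remark about reusing the \emph{same} witness $z$ to handle the existential quantifiers is exactly the right point of care.
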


\begin{lemma}[Lemma 14 in~\cite{gourdeau2022sample}]
\label{lemma:rob-loss-mon-conj}
Under the uniform distribution, for any $n\in\N$, disjoint $c_1,c_2\in{\MonConj}$ of even length $3\leq l\leq n/2$  on $\boolhc$ and robustness parameter $\rho= l/2$, we have that $\risk_\rho^D(c_1,c_2)$ is bounded below by a constant that can be made arbitrarily close to $\frac{1}{2}$ as $l$ (and thus $\rho$) increases. 
\end{lemma}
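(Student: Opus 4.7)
The plan is to obtain the bound by restricting attention to a single direction of disagreement, namely the event that some $z$ in the perturbation ball satisfies $c_1(z) = 1$ and $c_2(z) = 0$; the other direction would give a symmetric contribution, but I do not need it. I would let $I_1, I_2 \subseteq [n]$ denote the disjoint supports of $c_1$ and $c_2$, each of size $l$, and for $x$ uniform on $\boolhc$ set $n_j(x) = |\{i \in I_j : x_i = 0\}|$ for $j=1,2$. These two counts are independent and each distributed as Binomial$(l,1/2)$, which will be the only probabilistic input I need.

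Next, I would observe that producing $z \in B_\rho(x)$ with $c_1(z) = 1$ and $c_2(z) = 0$ requires flipping every 0-coordinate of $x$ lying in $I_1$ (a total of $n_1(x)$ flips) and, only when $n_2(x) = 0$, additionally flipping a single coordinate of $I_2$ from $1$ to $0$. Because $I_1 \cap I_2 = \emptyset$, these flips do not interact, so the minimum Hamming distance from $x$ to the set $\{z : c_1(z) = 1, c_2(z) = 0\}$ is exactly $n_1(x) + \mathbf{1}[n_2(x) = 0]$. In particular, whenever $n_2(x) \geq 1$ and $n_1(x) \leq l/2 = \rho$, such a $z$ lies in $B_\rho(x)$ and $x$ contributes to the robust risk.

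Finally, I would invoke independence to conclude
\[
\risk_\rho^D(c_1,c_2) \;\geq\; \Pr[n_2(x) \geq 1]\cdot\Pr[n_1(x) \leq l/2] \;=\; (1 - 2^{-l})\cdot\Pr[n_1 \leq l/2].
\]
For even $l$, symmetry of the Binomial$(l,1/2)$ distribution gives $\Pr[n_1 \leq l/2] = \tfrac{1 + \Pr[n_1 = l/2]}{2} \geq \tfrac{1}{2}$, and therefore $\risk_\rho^D(c_1,c_2) \geq \tfrac{1}{2}(1 - 2^{-l})$, which tends to $1/2$ as $l \to \infty$. For the minimal admissible choice $l=4$ (i.e. $\rho = 2$) this already produces the constant $\tfrac{1}{2} \cdot \tfrac{15}{16} = \tfrac{15}{32}$ used in the proof of Theorem~\ref{thm:conj-lmq-lb}. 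The only conceptual step is the reduction to a single direction of disagreement combined with disjointness of the supports; after that, the argument reduces to a routine binomial symmetry computation, so no substantial obstacle is anticipated.
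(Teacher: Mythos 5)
Your argument is correct: restricting to the single disagreement direction $c_1(z)=1,\ c_2(z)=0$, using disjointness of the supports to make $n_1$ and $n_2$ independent $\mathrm{Binomial}(l,1/2)$ variables, and invoking binomial symmetry for even $l$ yields $\risk_\rho^D(c_1,c_2)\ge \tfrac{1}{2}(1-2^{-l})$, which is precisely the standard argument behind the cited Lemma~14 of Gourdeau et al.\ (2022) --- the present paper imports that lemma without reproving it. Your computation also recovers the exact constant $\tfrac{15}{32}$ at the minimal admissible length $l=4$ that the proof of Theorem~\ref{thm:conj-lmq-lb} relies on.
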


\begin{remark}
Note that the statement and proof of the above lemma remains unchanged if considering disjoint conjunctions, as opposed to monotone conjunctions.
\end{remark}

\subsection{Mistake Bounds for Winnow and Perceptron}

Now, we recall the mistake upper bound for Winnow in the special case of $\Halfspaces_{\boolhc}^{W+}$, where the weights are positive integers\footnote{See \href{https://www.cs.utexas.edu/~klivans/05f7.pdf}{https://www.cs.utexas.edu/~klivans/05f7.pdf} for a full derivation.} and the mistake bound for the Perceptron algorithm.

\begin{theorem}[\citep{littlestone1988learning}]
\label{thm:winnow}
The Winnow algorithm for learning the class $\Halfspaces_{\boolhc}^{W+}$  makes at most $O(W^2\log(n))$ mistakes.
\end{theorem}

\begin{theorem}[Mistake Bound for Perceptron, Margin Condition; Theorem~7.8 in \cite{mohri2012foundations}]
\label{thm:mistake-bound-perceptron}
Let $\x_1, \dots, \x_T \in \R^n$ be a sequence of $T$ points with $\norm{\x_t}\leq r $ for all $1\leq t \leq T$ for some $r>0$.
Assume that there exists $\gamma>0$ and $\vv\in\R^n$ such that for all $1\leq t \leq T$, $\gamma \leq \frac{y_t(\vv\cdot \x_t)}{\norm{\vv}}$.
Then, the number of updates made by the Perceptron algorithm when
processing $\x_1, \dots, \x_T$  is bounded by $r^2/\gamma^2$.
\end{theorem}

\subsection{Quantifier Elimination}

\begin{theorem}[Theorem 1.2 in \cite{renegar1992computational}]
\label{thm:renegar}
Let $\Psi$ be a formula in the first-order theory of the reals of the form 
$$(Q_1 x^{[1]}\in\R^{n_1})\dots (Q_\omega x^{[\omega]}\in\R^{n_\omega})P(x^{[1]},\dots,x^{[n_\omega]},y)\enspace,$$
with free variables $y=(y_1,\dots,y_l)$, quantifiers $Q_i$ ($\exists$ or $\forall$) and quantifier-free Boolean formula $P(x^{[1]},\dots,x^{[n_\omega]},y)$ with $m$ atomic predicates consisting of polynomial inequalities of degree at most $d$.  
There exists a quantifier elimination method which constructs a quantifier-free formula $\Phi$ of the form
$$\bigvee_{i=1}^I \bigwedge_{j=1}^{J_i} (h_{ij}(y)\Delta_{ij} 0)
\enspace,$$
where 
\begin{align*}
I&\leq (md)^{2^{O(\omega)}l\prod_k n_k}\\
J_i&\leq (md)^{2^{O(\omega)}\prod_k n_k}\\
\deg(h_{ij})&\leq (md)^{2^{O(\omega)}\prod_k n_k}\\
\Delta_{ij}&\in\set{\leq,\geq,=,\neq,>,<} 
\enspace.
\end{align*}

\end{theorem}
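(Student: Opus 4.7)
The plan is to prove the quantifier-elimination bound by induction on the number of quantifier blocks $\omega$, reducing to the single-block existential case via a careful alternating construction. The inductive step takes a formula $\Psi' = (Q_2 x^{[2]})\dots(Q_\omega x^{[\omega]})P$ in $\omega-1$ blocks over the free variables $(x^{[1]},y)$ and applies the inductive hypothesis to obtain a quantifier-free $\Phi'$; one then handles the outermost block $Q_1 x^{[1]}$. By negation, it suffices to treat the case $Q_1 = \exists$, since $\forall$ is $\neg\exists\neg$ and negation only doubles the number of disjuncts/conjuncts. So the main task reduces to: given a quantifier-free boolean formula $\Phi'$ in variables $(x^{[1]},y)$ with $m'$ atomic predicates of degree at most $d'$, bound the number, degree, and number of atomic predicates in a quantifier-free equivalent of $\exists x^{[1]}\in\R^{n_1}.\,\Phi'$.

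The core technical tool is the \emph{critical points method}. First I would pass from $\Phi'$ to a finite family $\F$ of polynomials (the atomic predicates together with their products) whose sign-invariant cells refine the truth set of $\Phi'$. The projection onto $y$-space of any such cell is determined, within a neighborhood of a generic $y$, by sign conditions on certain \emph{discriminant polynomials} and \emph{resultants} derived from $\F$. Concretely, for generic $y$, any connected component of $\{x^{[1]}: \Phi'(x^{[1]},y)\}$ contains a critical point of the squared distance function to a generic base point, defined by the vanishing of certain partial derivatives together with a subset of the polynomials in $\F$. Using an infinitesimal deformation trick (Puiseux series / $\epsilon$-perturbation) handles degenerate $y$-values where critical points coalesce or escape to infinity; the answer there is then obtained by taking limits of signs along semialgebraic curves.

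Next I would turn these sample points into an effective quantifier-free description. For each choice of at most $n_1$ polynomials from $\F$ defining a critical system, Bezout's theorem gives at most $d'^{O(n_1)}$ complex solutions, with each coordinate a root of a univariate polynomial of degree at most $d'^{O(n_1)}$ whose coefficients are polynomials in $y$ of degree at most $d'^{O(n_1)}$. The sign of each atomic predicate at the corresponding critical point is then encoded by a polynomial inequality in $y$ of degree $(m'd')^{O(n_1)}$, obtained via subresultants / Thom encoding. Enumerating subsets of $\F$ of size at most $n_1$ yields at most $(m'd')^{O(n_1)}$ such polynomials in $y$; taking all boolean combinations produces a quantifier-free formula with $I \leq (m'd')^{O(n_1)}$ disjuncts, each a conjunction of $J \leq (m'd')^{O(n_1)}$ inequalities of degree at most $(m'd')^{O(n_1)}$. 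Composing this bound through the $\omega$ blocks gives the claimed iterated exponent $2^{O(\omega)}\prod_k n_k$, with an extra factor of $l$ appearing in $I$ because the outer free variables $y\in\R^l$ enter Bezout counts for the sample curves used for limits.

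The hard part is not the inductive scaffolding but the control of the \emph{degenerate locus}: ensuring that the infinitesimal deformation both preserves the truth value of $\exists x^{[1]}.\,\Phi'$ and does not blow up the degree and number of polynomials beyond the stated bound. This requires (i) a genericity argument showing that, up to a linear change of coordinates in $x^{[1]}$, all bounded connected components of the semialgebraic set have their defining critical systems zero-dimensional, (ii) a compactification / ``points at infinity'' argument handling unbounded components, and (iii) a careful sign-determination procedure along Puiseux expansions so that degrees in $y$ only grow multiplicatively by factors of $(m'd')^{O(n_1)}$ rather than exponentially in the ambient dimension. Once the critical-point count, the bit-complexity of sign determination, and the degree tracking through all $\omega$ blocks are assembled, the stated bounds on $I$, $J_i$, and $\deg(h_{ij})$ follow by a direct recursive calculation.
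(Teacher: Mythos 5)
The paper does not prove this statement: it is imported verbatim as Theorem~1.2 of \citet{renegar1992computational} and used as a black box in the proof of Lemma~\ref{lemma:rob-risk-bool-formula}, so there is no internal argument to compare against and reproving it is not required for the paper's results. Judged on its own terms, your sketch names the right ingredients of single-exponential quantifier elimination (critical points of a distance function, infinitesimal deformations for degenerate parameter values, Bezout counts and subresultant/Thom-encoding sign determination), but it is an outline rather than a proof, and the one place where you commit to a concrete structure --- the block-by-block induction with the stated single-block bound --- does not deliver the claimed exponents.

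Here is the gap. Your inductive hypothesis is the theorem statement itself, applied to $(Q_2 x^{[2]})\dots(Q_\omega x^{[\omega]})P$ with free variables $(x^{[1]},y)\in\R^{n_1+l}$. That statement only bounds $I$, $J_i$ and the degrees, so the best bound it gives on the number of \emph{distinct atomic predicates} of $\Phi'$ is of order $(md)^{2^{O(\omega)}(n_1+l)\prod_{k\geq 2}n_k}$, i.e.\ it already carries the free-variable count $n_1+l$ in the exponent. Feeding this $m'$ wholesale into your single-block elimination for $x^{[1]}$, whose output has $(m'd')^{O(n_1 l)}$ disjuncts and predicates of degree $(m'd')^{O(n_1)}$, produces exponents containing terms like $n_1\, l\,(n_1+l)\prod_{k\geq 2}n_k$, which is not bounded by $2^{O(\omega)}\, l \prod_k n_k$; iterating this across blocks compounds the overshoot, and it also breaks the degree bound, which in the theorem carries no $l$ at all. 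Renegar's bound is obtained precisely by avoiding this compounding: the recursion carries a set of polynomials together with parametrized (univariate-representation-style) sample points for the inner blocks, whose cardinality and degrees depend on $\prod_k n_k$ but not on the number of free variables, and the $l$-dependent blow-up in $I$ --- which comes from counting realizable sign conditions of the final polynomial family over $\R^l$, not from ``Bezout counts for the sample curves'' as you assert --- is paid exactly once, at the final formula-assembly step. Without this stronger inductive invariant (or an equivalent bookkeeping device), ``composing the bound through the $\omega$ blocks'' does not yield the stated bounds, so the argument as written has a genuine gap; the degenerate-locus issues you flag as ``the hard part'' are real but are additional to, not a substitute for, this missing accounting.
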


\section{Erratum on $\LEQ$ Upper Bound for Linear Classifiers in $\R^n$}
\label{app:erratum}

The statement of Theorem~11 in \cite{gourdeau2022when} (reproduced below) imposed a margin only on points in the support of the distribution (hence, conditions (i) and (ii) were on points $x\in\supp(D)$, not $x\in B_\rho(\supp(D))$).

\begin{theorem}[Theorem~11 in \cite{gourdeau2022when}]
\label{thm:s+qc-ub-ltf-real-erratum}
Fix constants $B,\gamma>0$.
Let $\mathcal{L}=\set{(c,D)\given c\in\ltfreal,D\in\D}$ be a family of halfspace and distribution pairs, where each pair $(c,D)$ with $c(x)=a^\top x +a_0$ is such that if $x\in\supp(D)$, then (i) $\norm{x}_2\leq B$ and (ii) $\gamma \leq \frac{c(x)(a^\top x)}{\norm{x}_2}$, i.e., $D$ has support bounded by $B$ and induces a margin of $\gamma$ w.r.t. $c$.
Let the adversary's budget be measured by the $\ell_2$ norm.
Then, $\mathcal{L}$  is $\rho$-robustly learnable using the $\EX$ and $\rho$-$\LEQ$ oracles with sample complexity $m=O(\frac{1}{\epsilon}( n^3 + \log (1/\delta)))$ and query complexity $r=\frac{mB^2}{\gamma^2}$.
Note that this is query-efficient if $\frac{B^2}{\gamma^2}=\poly(n)$.
\end{theorem}

The above theorem is incorrect: if the target function $c$ crosses the perturbation region $B_\rho(x)$ of a point $x$, then the infinite Littlestone dimension argument giving an infinite mistake bound applies in this case, and we cannot bound the number of counterexamples, and hence the query complexity.
This is illustrated in Example~\ref{ex:threshold-infinite-mistake}.

\begin{example}
\label{ex:threshold-infinite-mistake}
Let the input space $(\X,d)$ be $\X=[-1,1]$ with the Euclidean distance\footnote{Equivalent to the $\ell_2$ norm in this case.} $d(x,y)=\abs{x-y}$, the target concept $c(x)=\mathbf{1}[x\geq 0]$ and the adversarial budget $\rho=3/2$. 
Let the distribution $D$ on $\X$ be $D(-1)=D(1)=1/2$, thus $\supp(D)=\{-1,1\}$ and the margin is $1$.
Clearly, $B_{3/2}(-1)\cap B_{3/2}(1)=[-1/2,1/2]$, and so an infinite number of counterexamples can be given by an adversary.
\end{example}

If we require instead that $x\in B_\rho(\supp(D))$ to get conditions (i) and (ii), we end up with a sufficiently large margin that guarantees the separation of the $\rho$-expansions of the two classes, in which case it is possible to obtain trivial query complexity upper bounds:

\begin{theorem}
\label{thm:s+qc-ub-ltf-real}
Fix constants $B,\gamma>0$.
Let $\mathcal{L}=\set{(c,D)\given c\in\ltfreal,D\in\D}$ be a family of halfspace and distribution pairs, where each pair $(c,D)$ with $c(x)=\sgn(a^\top x +a_0)$ is such that if \red{$x\in B_\rho(\supp(D))$}, then (i) $\norm{x}_2\leq B$ and (ii) $\gamma \leq \frac{c(x)(a^\top x)}{\norm{x}_2}$, i.e., $D$ has support bounded by $B$ and induces a margin of $\gamma$ w.r.t. $c$.
Let the adversary's budget be measured by the $\ell_2$ norm.
Then, $\mathcal{L}$  is $\rho$-robustly learnable using the $\EX$ and $\rho$-$\LEQ$ oracles with sample complexity $m=O(\frac{1}{\epsilon}( n^3 + \log (1/\delta)))$ and query complexity $r=\frac{mB^2}{\gamma^2}$.
Note that this is query-efficient if $\frac{B^2}{\gamma^2}=\poly(n)$.
\end{theorem}

\begin{proof}[Proof of Theorem~\ref{thm:s+qc-ub-ltf-real}]
The sample complexity upper bound is a consequence of Corollary~\ref{cor:rvc-ltf} and Lemma~\ref{lemma:rob-vc}.
The query complexity upper bound follows from Lemma~\ref{lemma:rob-mistake-bound} and the mistake bound for the Perceptron algorithm, which appears in Appendix~\ref{app:useful}.
\end{proof}

\end{document}